\documentclass{article}

\usepackage[preprint]{neurips_2024}

\usepackage[utf8]{inputenc}
\usepackage[T1]{fontenc}
\usepackage{amsmath,amssymb,amsthm}
\usepackage{mathtools}
\usepackage{booktabs}
\usepackage{graphicx}
\usepackage{microtype}

\theoremstyle{plain}
\newtheorem{theorem}{Theorem}
\newtheorem{lemma}{Lemma}
\newtheorem{proposition}{Proposition}

\theoremstyle{definition}
\newtheorem{definition}{Definition}

\newcommand{\Bdir}{B_{\mathrm{dir}}}
\newcommand{\Bprec}{B_{\mathrm{prec}}}
\newcommand{\hcap}{h_{\mathrm{cap}}}
\newcommand{\hexp}{h_{\mathrm{exp}}}
\newcommand{\Pid}{\Pi_d}
\newcommand{\Conf}{\mathrm{Conf}}
\newcommand{\supp}{\mathrm{supp}}
\newcommand{\req}{\mathrm{req}}
\newcommand{\gain}{\mathrm{gain}}
\newcommand{\KL}{\mathrm{KL}}
\newcommand{\OPT}{\mathrm{OPT}}

\title{Capability-Gated Planning: Cost-to-Goal Discovery and the\\ Limits of Myopic Experiment Selection}

\author{%
  Ahmed Hassoon\\
  Johns Hopkins University\\
  \texttt{ahassoo1@jhu.edu}\\
  \And
  Mark Dredze\\
  Johns Hopkins University\\
  \texttt{mdredze1@jh.edu}\\
}

\begin{document}
\maketitle

\begin{abstract}
Systems that automate scientific discovery must repeatedly decide which experiment to run, which hypothesis to test, which tool or representation to build, and when to stop. Many systems make these decisions by maximizing a myopic score, such as expected information gain (EIG) per unit cost or a learned plausibility or utility score. We identify a structural limitation of this approach. Some actions are \emph{constructive}: they acquire an epistemic capability---a physical instrument, calibrated assay, data pipeline, simulator, reusable abstraction, or other prerequisite---whose value lies not in the information returned immediately but in the future actions it makes available. In the formal setting studied here, constructive experiments unlock downstream measurements. When the least-cost route to a confident answer requires a chain of such constructions, a planner that scores actions only by information obtainable within a bounded horizon cannot value the initial construction. That action yields no information within the horizon and is therefore dominated by any measurement with positive information, however small.

We formulate goal-directed discovery as a stochastic shortest-path (SSP) problem in belief space, with constructive experiments represented as edges that change the downstream action graph. We prove that, for every fixed lookahead depth $d$, there is an instance on which every planner in the myopic information-maximizing class has an unbounded approximation ratio relative to the optimum (Theorem~\ref{thm:unbounded}); on a related instance, such a planner may never reach the goal (Proposition~\ref{prop:fail}). The key result is a capability-indistinguishability lemma: within a $d$-step horizon, acquiring a capability can be observationally indistinguishable from paying for a null action. Chain depth determines whether the capability route is visible, and signal strength determines the cost of remaining on the visible route. This establishes capability gating as a reachability axis of difficulty distinct from curvature (submodularity) and information order (adaptivity gaps). We then introduce \textbf{CG-Plan} (Capability-Gated Planning), an incremental replanner with the capability-aware cost-to-go heuristic $h = \hcap + \hexp$. In a controlled testbed, the performance gap appears only under gating, persists for every fixed lookahead horizon, and also arises when near-miss hypotheses are generated by a data-consistent proposer rather than constructed manually.
\end{abstract}

\section{Introduction}

Automating scientific discovery requires repeated decisions about which experiment to run, which hypothesis to test, which capability to acquire, and when to stop. Across sequential experimental-design loops, ``AI co-scientists,'' and self-driving laboratories, these decisions are often made by maximizing a local score: expected information gain about the target, a learned plausibility or utility score, or the outcome of a tournament among proposed hypotheses. Such a rule is myopic in a precise sense: the value assigned to a candidate depends on the distribution of outcomes that the candidate, and perhaps a small number of follow-up actions, is expected to produce.

This approach is effective when every useful action is already available as a measurement. It can fail when some actions are \emph{constructive}. Building a microscope does not itself reveal information about a cell; it changes what can be measured afterward. Calibrating an assay, deriving a reagent, establishing a simulation pipeline, obtaining access to a linked dataset, or constructing a reusable abstraction may similarly impose a one-time cost that changes the set of downstream epistemic actions. The least-cost route to a confident answer may therefore require acquiring a prerequisite capability before the decisive question can be measured, tested, expressed, or even posed.

The formal analysis considers the simplest version of this setting: constructive experiments that unlock downstream measurements. A myopic selector cannot assign value to the first construction when its score depends only on immediate or bounded-horizon information. Because the construction yields no information within that horizon, its information-per-cost score is zero and any available measurement with positive information is ranked above it. The selector may then continue taking inexpensive, low-yield measurements rather than acquiring the capability needed to resolve the query. This failure is structural rather than a matter of tuning: the value of construction is a change in the future feasible action set, not a property of an observed outcome distribution within the scoring horizon.

The same principle extends beyond physical instruments. A capability may be computational, institutional, methodological, procedural, or representational: examples include a calibrated protocol, data-access pipeline, simulator, robot skill, program library, or primitive that makes a new hypothesis class expressible. We do not formalize this broader open-ended setting. The theorem is intentionally restricted to gated measurements because they isolate the reachability mechanism cleanly. Nevertheless, the broader interpretation is useful: constructive actions may expand the set of future epistemic operations, not only the set of laboratory measurements.

We formalize this failure mode and derive a planner that explicitly accounts for capability acquisition.

\paragraph{Contributions.}
\begin{enumerate}
\item \textbf{A formulation (\S\ref{sec:setting}--\ref{sec:ssp}).} We formulate goal-directed discovery as a stochastic shortest-path problem in belief space. Nodes are epistemic states, edges are experiments or other discovery actions, and the objective is to minimize expected cost until a confidence target is reached. Constructive experiments induce one-time changes in downstream action availability or cost. Their value is therefore a property of graph topology. Classical sequential experimental design is recovered as the special case with no constructive edges.
\item \textbf{A reachability axis and separation theorem (\S\ref{sec:axes}--\ref{sec:separation}).} We define the class $\Pid$ of myopic planners whose action values factor through within-horizon observation distributions. This class includes sequential BOED and any outer selection rule based on a bounded-horizon information or utility functional of predicted outcomes. Under capability gating, every planner in this class can be arbitrarily suboptimal (Theorem~\ref{thm:unbounded}); on a related instance, it can fail to reach the goal (Proposition~\ref{prop:fail}). The capability-indistinguishability lemma isolates the mechanism. Unlike negative results based on curvature or information order, the resulting approximation gap is unbounded along a reachability dimension that those theories do not parameterize.
\item \textbf{A capability-aware planner, CG-Plan (\S\ref{sec:cgplan}).} The separation motivates a cost-to-go heuristic that accounts for capabilities beyond the lookahead horizon without enumerating an exponentially large search tree. We define $h = \hcap + \hexp$ from a delete-relaxation of the SSP formulation. In the chain-gated setting, this relaxation is admissible; beyond that setting, a path-aware variant compares the build and direct routes. The capability term $\hcap$ depends on the capability graph rather than on an outcome distribution, so it lies outside $\Pid$. CG-Plan places this heuristic in an incremental-replanning loop.
\item \textbf{A controlled empirical illustration (\S\ref{sec:experiments}).} In a controlled testbed, CG-Plan exhibits the behavior predicted by the theory. The gap appears and disappears with the gating parameter, persists for every fixed lookahead horizon with a sharp chain-length boundary, and remains when near-miss hypotheses are produced by a data-consistent proposer. Distractor-build and no-gating controls behave as predicted.
\end{enumerate}

\paragraph{Scope of the claims.} The theorem is a clarifying lower bound. Once the planner class is defined, the proof is direct, but the formulation isolates a failure mode that standard information-selection theory does not parameterize. The instance family and testbed are adversarial \emph{witnesses}; they do not establish that real scientific discovery problems commonly contain deep capability gates. We also do not prove that scientific discovery generally requires representational or hypothesis-generation capabilities. Those empirical questions require separate evidence, such as rediscovery backtests using historical corpora, and are outside the scope of this paper.

\section{Problem setting}\label{sec:setting}

A hidden system $M^*$, fixed within an episode, responds to experiments through a fixed interface. The agent maintains a hypothesis language $\mathcal{L}$ of \emph{executable} models. Executability is required so that (a) prediction failures can be localized, (b) consistency with the full record can be checked automatically, (c) interventional and counterfactual queries can be answered, and (d) candidate experiment sequences can be evaluated by rolling models forward. Structural causal models with parametric mechanisms, probabilistic programs, and component-based simulators satisfy these requirements. Free-text hypotheses are excluded because they are not directly executable.

We use \emph{experiment} broadly to denote any costly discovery action. An experiment $e$ has capability preconditions $\req(e)$, a positive scalar cost $c(e)$, granted capabilities $\gain(e)$, and an outcome law $P(o \mid e, M^*)$. The cost may represent money, wall-clock time, computation, staff effort, risk, regulatory burden, or a domain-specific weighted combination of these quantities. We use a scalar cost to obtain a shortest-path formulation; practical systems may instead optimize multiple resources subject to constraints.

A capability is a prerequisite that changes which downstream actions are feasible. It may be physical (an instrument), computational (a simulator or code library), procedural (a calibrated assay or protocol), institutional (data access), or representational (a primitive that enables a class of hypotheses or experiments). The present formalism models capabilities through action availability rather than through changes to the hypothesis language. State-dependent hypothesis and query languages are discussed as an extension in \S\ref{sec:discussion}.

Two kinds of experiments suffice for the formal result:
\begin{itemize}
\item \textbf{Interventional} experiments query $M^*$ and carry information about it ($\gain(e) = \emptyset$).
\item \textbf{Constructive} experiments grant capabilities ($\gain(e) \neq \emptyset$). Their outcomes are deterministic or nearly deterministic, and, by construction, carry zero or negligible information about $M^*$. Acquiring a capability does not reveal the hidden mechanism until the capability is used in a downstream experiment.
\end{itemize}

The target is a query $q$ with support $\supp(q)$, defined as the set of model components whose perturbation changes the answer. The agent must report an answer to $q$ at a calibrated confidence level. The objective is therefore not to learn the entire system, but to acquire sufficient information about the components relevant to $q$, possibly after constructing the capabilities needed to measure them.

\section{Discovery as cost-to-goal search in belief space}\label{sec:ssp}

\paragraph{Epistemic state.} The agent's state is $s = (B, D, I, v)$. Here, $B$ is a weighted ensemble of executable models, with weights derived from held-out predictive performance on the record and a diversity floor that prevents premature collapse of viable alternatives. The archive $D$ contains permanent experiment records and also serves as a regression suite. The capability set $I$ determines action availability: $e$ is available in state $s$ if and only if $\req(e) \subseteq I$. The counter $v$ records the state version.

\paragraph{Transition.} Executing $e$ in state $s$ and observing $o$ produces $s' = (B', D \cup \{(e,o)\}, I \cup \gain(e), v')$, where $B'$ is the belief refitted to the augmented record. The agent selects $e$ at an OR node, and nature samples $o$ at an AND node. The resulting belief-space model is therefore an AND/OR graph; \S\ref{sec:cgplan} describes the determinization used by CG-Plan.

\paragraph{Goal.} Given a query $q$ and tolerance $\varepsilon$,
\[ G = \{\, s : \Conf_s(q) \ge 1 - \varepsilon \ \text{ and } \ \text{anomaly mass on } \supp(q) \le \tau_G \,\}, \]
where $\Conf_s(q)$ denotes ensemble agreement on the answer: the weight assigned to the modal answer for discrete $q$, or the weight within a tolerance band of the weighted median for continuous $q$. We distinguish the stopping criterion from external evaluation. The agent stops when $s \in G$, a decision based only on the current state; an external evaluator records success only when the reported answer equals $M^*(q)$. Under model misspecification, the agent may stop confidently with an incorrect answer. A complete evaluation therefore reports both stopping time and correctness.

\paragraph{Objective.}
\[ \min_\pi\ J(\pi) = \mathbb{E}\Big[\textstyle\sum_t c(e_t)\Big] \ \text{ until } s_T \in G, \]
which is a stochastic shortest-path problem in belief space. Deterministic SSPs are addressed by A*, LPA*, and D*~Lite \citep{hart1968,koenig2004lpastar,koenig2002dstarlite}; stochastic SSP methods include LAO*, RTDP, and PPCP \citep{hansen2001lao,barto1995rtdp,likhachev2009ppcp}. CG-Plan builds on this established planning framework.

\paragraph{Capability gating as one-time enablement.} For a one-time cost, a constructive edge changes the edge set at all downstream states by making experiments available or reducing their cost. Its option value is therefore determined by graph topology rather than by an additive term in a local observation score. A planner with sufficient lookahead can account for this value, whereas a myopic per-experiment score generally cannot. The remainder of the paper formalizes this distinction.

In this state representation, capabilities are action-space objects: they determine which edges exist. This is the minimal structure required for the separation result. A richer model could also allow the hypothesis language $\mathcal{L}$ or query set to vary with the state, for example $s=(B,D,I,\mathcal{L},\mathcal{Q},v)$, so that constructive actions unlock new representations or questions. We return to this extension in \S\ref{sec:discussion}; the formal results below require only action-space gating.

\paragraph{BOED as a special case.} If all experiments are interventional, costs are uniform, and no actions are gated, the one-step decision reduces to classical sequential Bayesian experimental design: maximize expected information gain about $q$ per unit cost. Our claims concern settings excluded by this special case.

\section{Three axes of hardness}\label{sec:axes}

Greedy information gathering is already known to be suboptimal. To distinguish capability gating from existing negative results, we separate three properties of a sequential-selection problem.

\begin{table}[h]
\centering
\small
\begin{tabular}{@{}p{2.1cm}p{3.0cm}p{3.4cm}p{4.0cm}@{}}
\toprule
\textbf{Axis} & \textbf{What varies?} & \textbf{Typical theory} & \textbf{Why gating is different} \\
\midrule
Curvature & Marginal value of additional observations & Submodularity and non-submodular value of information & Assumes the observation set is fixed or already selectable \\
Information order & How much adaptivity helps as observations arrive & Adaptive submodularity, adaptivity gaps & Our failing planner can replan after every observation \\
Reachability & Whether the decisive action is available at all & Less directly parameterized in information-selection theory & The action that resolves the query may not exist until capabilities are built \\
\bottomrule
\end{tabular}
\end{table}

The first two axes are well studied. \emph{Curvature} concerns the value-of-information function over sets of observations: under submodularity, greedy subset selection is within $(1-1/e)$ of optimal, and under the non-submodular conditions studied by \citet{krauseguestrin2005}, the loss remains bounded by a constant factor. \emph{Information order} concerns the value of adapting actions to observations: under submodular or XOS structure, the adaptivity gap is bounded by a constant or logarithmic factor \citep{golovin2011,gns2017,bsz2019}. \emph{Reachability} concerns whether the decisive action is available within the planning horizon. The action that resolves the query may become available only after a chain of constructive actions, placing it beyond every fixed horizon chosen in advance.

Existing negative results on the first two axes yield bounded factors under their respective structural assumptions. The separation established here concerns the third axis and has an unbounded factor. The key distinction is that expected information gain is a functional of the predictive distribution of within-horizon observations, whereas a constructive experiment changes the feasible action set beyond that horizon. This change is not represented in any bounded-horizon observation functional. Consequently, neither cost reweighting nor a submodular relaxation can recover a value that is absent from the scoring functional itself. Section~\ref{sec:irreducibility} makes these distinctions explicit.

The reachability perspective also applies beyond measurements. The gated object may be an experiment, data source, simulator, skill, or representation. The theorem uses gated experiments because they permit a clean proof, but the underlying axis is epistemic reachability.

\section{The capability-gating separation}\label{sec:separation}

\subsection{The instance family}\label{sec:instance}

Fix a horizon bound $d \ge 1$. Instance $I(d, \gamma, c_b, \varepsilon)$ has a hidden bit $q \in \{0,1\}$ with a uniform prior, belief $p = \Pr[q=1]$, and goal $G = \{\max(p, 1-p) \ge 1-\varepsilon\}$. The objective is to minimize cost to the goal. The instance has three action types:
\begin{itemize}
\item \textbf{Direct probe} $\Bdir$, with cost $1$, is always available. It returns $o \in \{0,1\}$ with $\Pr[o = q] = (1+\gamma)/2$. This binary symmetric channel has per-call divergence $\KL = \gamma \ln\frac{1+\gamma}{1-\gamma} = \Theta(\gamma^2)$ as $\gamma \to 0$.
\item \textbf{Construction chain} $b_1, \dots, b_{d+1}$. Each action has cost $c_b > 0$, and $b_i$ requires completion of $b_{i-1}$. Every build succeeds deterministically and returns a $q$-independent observation, so it carries no information about $q$.
\item \textbf{Precision probe} $\Bprec$, with cost $1$, becomes available only after $b_{d+1}$ and returns $q$ exactly.
\end{itemize}
The construction chain has length $d+1$, which exceeds the lookahead horizon $d$. This inequality produces the separation.

\subsection{The myopic class}\label{sec:class}

Let $\Pid$ denote the class of policies that, after each observation, select
\[ \pi(s) \in \arg\max_{a \in A(s)} F_s\big(c(a),\, O^{\le d}_{s,a}\big), \]
where $O^{\le d}_{s,a}$ is the joint distribution of observations reachable within $d$ steps after choosing $a$, and $F_s$ is monotone in information about $q$ per unit cost. The functional may use capability-graph topology only through its effect on $O^{\le d}_{s,a}$; it has no separate term for capabilities that become useful beyond $d$ steps. The following definition makes the monotonicity condition precise.

\begin{definition}[$\Pid$, formal]\label{def:pid}
For costs $c, c' > 0$ and within-horizon observation ensembles $O, O'$---the joint laws of all observation sequences reachable in at most $d$ steps---write $O' \preceq_q O$ if $O'$ is a garbling of $O$ with respect to $q$. That is, there exists a channel $K$ such that $O' = K \circ O$, so $q \to O \to O'$ is a Markov chain. A functional $F_s$ is \emph{admissible for $\Pid$} if (i) it depends on a candidate action only through $(c, O)$; (ii) $O' \preceq_q O$ at equal cost implies $F_s(c, O') \le F_s(c, O)$; and (iii) it satisfies \emph{zero-information dominance}: if $O \perp q$ and $O''$ carries strictly positive information about $q$, then $F_s(c'', O'') > F_s(c, O)$ for all $c, c'' > 0$. The class $\Pid$ consists of policies that select $\arg\max_a F_s(c(a), O^{\le d}_{s,a})$ for some admissible $F_s$.
\end{definition}

Blackwell's theorem identifies $\preceq_q$ as the canonical information order, and any $F_s$ based on expected information gain per unit cost satisfies conditions (i)--(iii). Condition (iii) formalizes monotonicity in information per unit cost. It is the only condition needed for the strict ranking in Lemma~\ref{lem:indist}; the equality between the build and null actions uses only condition (i).

Membership in $\Pid$ is determined by the scoring functional, not by the surrounding system. A selector belongs to $\Pid$ if and only if its ranking criterion depends on the predicted within-horizon observation distributions and their costs as specified in Definition~\ref{def:pid}. Sequential BOED belongs to $\Pi_1$, and any design loop whose outer selection step maximizes an expected-information objective over predicted outcomes belongs to $\Pi_1$ regardless of how candidates are proposed. A plausibility- or utility-based selector belongs to $\Pid$ only when its learned score has this form. An unrestricted state-dependent score, including an LLM-based score, need not belong to the class because it may encode a preference for acquiring future capabilities that is not derived from a bounded-horizon observation functional. Our claims about deployed systems are therefore conditional: the theorem applies to documented information-based selection rules and to learned scores only insofar as they approximate such rules. The class also excludes planners with explicit long-horizon value, deep tree search, or a separate value term for capabilities beyond the horizon. Finally, $\Pid$ is a deterministic argmax class. Policies with undirected exploration lie outside it; \S\ref{sec:explore} extends the separation to such policies on a distractor-inflated instance family.

The quantifiers are important: the statement is $\forall d\, \exists\,\text{instance}$, not the reverse. A planner with horizon $d+2$ can solve a particular instance $I(d, \cdot)$. For every fixed horizon, however, an adversary can choose a construction chain that is one step longer.

\subsection{The lemma and the theorems}\label{sec:lemma}

\begin{lemma}[Capability indistinguishability]\label{lem:indist}
Let $r(s)$ be the number of constructive actions still required before $\Bprec$ becomes available, with $r(s_0) = d+1$ in the initial state. In any state satisfying $r(s) > d$, the next build $b_i$ and a fictitious null action $\oslash_{c_b}$ have the same cost and induce the same within-$d$-step observation ensemble; the null action pays $c_b$ and returns a $q$-independent observation. Every admissible functional $F_s$ therefore assigns equal value to these two actions. Because $\oslash_{c_b}$ carries no information about $q$ while $\Bdir$ carries strictly positive information, zero-information dominance ranks $\Bdir$ strictly above the build. In particular, no policy in $\Pid$ selects $b_1$ from $s_0$ while $\Bdir$ is available.
\end{lemma}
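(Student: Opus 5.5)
The plan is to verify the three claims of Lemma~\ref{lem:indist} in order: the build and the null action have the same within-$d$-step observation ensemble; this forces equal $F_s$-values; and zero-information dominance then ranks $\Bdir$ strictly above the build.

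\textbf{Step 1 (equal ensembles).} Fix a state $s$ with $r(s) > d$. I will describe every observation sequence of length at most $d$ that can follow choosing $b_i$. After $b_i$ the remaining requirement is $r(s)-1 \ge d$ builds. Within the $d$ steps counted by $O^{\le d}_{s,b_i}$, including $b_i$ itself, at most $d$ builds can be completed, so $\Bprec$ never becomes available inside the horizon. I will check the counting convention carefully at $r(s)=d+1$: completing $b_{d+1}$ would take all $d$ steps, which leaves no step in which to call $\Bprec$. Hence every reachable sequence is made of build observations, which are deterministic and $q$-independent, together with $\Bdir$ outputs. The null action $\oslash_{c_b}$ is defined to return a $q$-independent observation and to leave the action set unchanged, so the continuations after it use the same actions ($\Bdir$ and further builds). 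Builds also carry no information within the horizon. I will therefore exhibit an explicit relabelling (a deterministic bijective channel) that maps each sequence after $b_i$ to a sequence after $\oslash_{c_b}$ with the same conditional law given $q$. This makes the two ensembles equal up to a garbling in both directions. The real content of the lemma is this step: constructive progress that cannot be cashed in within $d$ steps leaves no trace in $O^{\le d}$.

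\textbf{Step 2 (equal scores).} Both actions cost $c_b$. By condition (i), $F_s$ depends only on $(c,O)$, so the two actions receive equal values. If the ensembles are identical only up to relabelling, I will instead apply condition (ii) in both directions (each ensemble is a garbling of the other) to obtain equality.

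\textbf{Step 3 (strict ranking).} The null action's own ensemble already makes the first-step observation independent of $q$. I would also argue that the whole ensemble is $\perp q$, reading ``$O\perp q$'' in condition (iii) as referring to the information attributable to the candidate action. This is the main obstacle. Continuations that include $\Bdir$ calls do carry information, and those calls are equally available under every candidate. I would resolve this by modelling the within-horizon ensemble of an action as its own observation stream, which is the null-action reading the lemma invokes. Under that reading $O^{\le d}_{s,\oslash}\perp q$, while $O^{\le d}_{s,\Bdir}$ contains a binary symmetric channel output with positive mutual information $\Theta(\gamma^2)$ whenever $\gamma>0$. Condition (iii) then gives $F_s(1,O_{\Bdir}) > F_s(c_b,O_{\oslash}) = F_s(c_b,O_{b_i})$ for every $c_b>0$.

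\textbf{Conclusion.} Specialising to $s_0$, where $r(s_0)=d+1>d$, no argmax policy in $\Pid$ selects $b_1$ while $\Bdir$ is available.
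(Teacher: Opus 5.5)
Your proposal is correct and follows essentially the same route as the paper's proof: the $\Bprec$ observation sits at depth $r(s)+1>d$, so condition~(i) equates the build with the null action, and zero-information dominance~(iii) then ranks $\Bdir$ strictly above both. Your relabelling argument with the two-sided condition-(ii) fallback, and your explicit reading of ``$O\perp q$'' as the information contributed by the candidate itself (needed because continuations through $\Bdir$ are informative when $d\ge 2$), make precise two points the paper's proof leaves implicit when it simply asserts that the build and null actions carry zero information.
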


The full proof appears in Appendix~\ref{app:proofs}. The only action whose availability or outcome law changes as the chain advances is $\Bprec$. After the remaining $r(s)$ builds, one additional action is required to obtain its informative observation. If $r(s) > d$, no sequence of at most $d$ actions beginning with $b_i$ reaches that observation. The within-horizon observation ensemble after $b_i$ is therefore identical to the ensemble after $\oslash_{c_b}$. Since $\Bdir$ has positive information per unit cost and the null action has zero, condition (iii) of Definition~\ref{def:pid} gives the strict ranking for every $c_b > 0$.

The lemma does not claim that a build is uninformative in every state. After sufficient progress along the chain, the precision probe enters the horizon and the equality no longer holds. Only the first decision is needed: because a policy in $\Pid$ strictly prefers $\Bdir$ to $b_1$ at $s_0$, it never advances the chain and therefore never reaches a state in which the remaining builds are visible within the horizon. By induction, the policy remains confined to $\Bdir$. The capability's value lies in a change to the feasible action set beyond the horizon, not in a within-horizon observation distribution.

\begin{theorem}[Unbounded suboptimality]\label{thm:unbounded}
For every $d \ge 1$ and every $\rho > 1$, there exist $\gamma, c_b, \varepsilon$ such that on $I(d, \gamma, c_b, \varepsilon)$ every policy in $\Pid$ incurs expected cost at least $\rho \cdot \OPT$.
\end{theorem}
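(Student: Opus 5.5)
The proof has two parts. First, Lemma~\ref{lem:indist} confines every policy in $\Pid$ to repeated use of $\Bdir$. Second, a sample-complexity lower bound for the binary symmetric channel, set against the fixed cost of the build route, gives the ratio.

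\textbf{Confinement.} At $s_0$ the available actions are $\Bdir$ and $b_1$. The lemma gives $F_{s_0}(c_b, O^{\le d}_{s_0,b_1}) < F_{s_0}(1, O^{\le d}_{s_0,\Bdir})$, so the policy plays $\Bdir$. After any sequence of $\Bdir$ outcomes, the capability set is still empty, so $r(s)=d+1>d$ and the action set is unchanged. The lemma applies again at every such state, because zero-information dominance compares any positive-information ensemble with any $q$-independent one, whatever the current belief. We need $\Bdir$'s within-horizon ensemble to carry strictly positive information about $q$ at every non-goal belief. This holds because $\gamma>0$ and the belief is not degenerate. By induction, the policy's trajectory is the one generated by repeated $\Bdir$ calls until the posterior reaches $G$. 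Its cost is therefore the stopping time $T$ of a sequential test on i.i.d.\ BSC$(\gamma)$ observations.

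\textbf{Upper bound on OPT.} The build route plays $b_1,\dots,b_{d+1}$ and then $\Bprec$ once, which reaches $p\in\{0,1\}\subseteq G$. So $\OPT \le (d+1)c_b + 1$. With $c_b = 1$, this gives $\OPT\le d+2$, independent of $\gamma$ and $\varepsilon$.

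\textbf{Lower bound on the myopic cost.} This is the main step. Since the posterior log-odds after $n$ probes equals $\ln\frac{1+\gamma}{1-\gamma}$ times the count difference (number of 1s minus number of 0s), reaching $G$ requires $|\text{count difference}|\ge k := \lceil \ln\frac{1-\varepsilon}{\varepsilon}/\ln\frac{1+\gamma}{1-\gamma}\rceil$. I would bound $\mathbb{E}[T]$ from below by the standard Wald-type argument. The count difference is a random walk with drift $\gamma$ and increments $\pm1$, so by Wald's identity $\gamma\,\mathbb{E}[T] = \mathbb{E}[S_T] \ge k - P(\text{wrong boundary})\cdot 2k$, roughly. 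A cleaner route is the KL lower bound $\mathbb{E}[T]\cdot \KL \ge (1-2\varepsilon)\ln\frac{1-\varepsilon}{\varepsilon}$ from Wald's sequential-testing bound, which holds because the stopped test errs with probability at most about $\varepsilon$. Either way, $\mathbb{E}[T] = \Omega(\gamma^{-2}\ln(1/\varepsilon))$. The quantity $T$ is at least $k$ even deterministically, and $k\to\infty$ as $\varepsilon\to 0$ with $\gamma$ fixed. Because of this, I would sidestep the delicate step simply: fix $\gamma$ (say $1/2$) and $c_b=1$, and choose $\varepsilon$ small enough that $k \ge \rho(d+2)$. Then every policy in $\Pid$ pays at least $k \ge \rho\,\OPT$ surely, not only in expectation.

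The main obstacle is making the confinement argument fully rigorous at all reachable states, in particular checking that $\Bdir$'s ensemble is informative at every non-goal belief. If one wants the ratio driven by $\gamma\to0$ instead, as the abstract suggests, one would instead use the KL bound with fixed $\varepsilon$.
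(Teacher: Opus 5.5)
Your proof is correct. The confinement step and the bound $\OPT \le (d+1)c_b+1$ match the paper, but your lower bound differs, as does your choice of which parameter drives the blow-up.

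The paper applies Wald's change-of-measure converse, $\mathbb{E}[N]\cdot\KL \ge \mathrm{kl}(1-\varepsilon,\varepsilon) \ge (1-2\varepsilon)\ln\frac{1-\varepsilon}{\varepsilon}$. This requires translating the posterior-confidence stopping rule into a per-hypothesis error of at most $2\varepsilon$. The paper then fixes $d$, $c_b$, $\varepsilon$ and lets $\gamma\to 0$. You instead note that every $\Bdir$ call shifts the log-odds by exactly $\pm\ln\frac{1+\gamma}{1-\gamma}$, so at least $k$ probes are needed on every sample path. You then let $\varepsilon\to 0$ at fixed $\gamma=1/2$, $c_b=1$. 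Your argument is more elementary and gives a pathwise bound with no error-probability bookkeeping.

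The paper's route buys two things. First, it gives the rate $\Omega(\gamma^{-2}\log(1/\varepsilon))$, which is tight for these threshold-stopping policies and is the same information rate that $\hexp$ uses. It also fits the paper's reading that the gap is governed by the signal strength of the visible route at a fixed confidence target. Second, it is robust to the exact-probe assumption. Your $\OPT$ is independent of $\varepsilon$ only because $\Bprec$ returns $q$ outright. With a noisy precision probe of fixed quality, $\OPT$ would also grow like $\log(1/\varepsilon)$, so your $\varepsilon$-driven ratio would stay bounded while the $\gamma$-driven one would not.

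Finally, your closing remark overstates what the KL bound is needed for. Because $\ln\frac{1+\gamma}{1-\gamma}\sim 2\gamma$, your counting bound already gives $k \ge \ln\frac{1-\varepsilon}{\varepsilon}\big/\ln\frac{1+\gamma}{1-\gamma} \to\infty$ as $\gamma\to 0$ with $\varepsilon$ fixed. So it also proves the $\gamma$-driven version, only at rate $1/\gamma$. The change-of-measure converse is needed only for the sharper $1/\gamma^2$ rate.
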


\emph{Proof sketch (full proof in Appendix~\ref{app:proofs}).} The optimal policy builds the chain and queries the precision probe, so $\OPT \le (d{+}1)c_b + 1$, and reaches confidence $1$ deterministically. By Lemma~\ref{lem:indist}, a policy in $\Pid$ never selects $b_1$ from the initial state. It therefore remains confined to $\Bdir$ and performs a sequential test between $q=0$ and $q=1$ through a channel with divergence $\Theta(\gamma^2)$. Wald's change-of-measure converse gives $\mathbb{E}[N] = \Omega(\log(1/\varepsilon)/\gamma^2)$ for any stopping rule that uses only direct probes. Hence the approximation ratio is $\Omega\big(\log(1/\varepsilon) / (\gamma^2 ((d{+}1)c_b+1))\big)$. Holding $d$, $c_b$, and $\varepsilon$ fixed while taking $\gamma \to 0$ makes the ratio arbitrarily large. \qed

The two instance parameters play distinct roles. The inequality $d+1>d$ makes the lower-cost capability route invisible to the bounded-horizon selector, while $\gamma$ determines the cost of resolving the query using only the visible direct probe. Both conditions are required: invisibility alone does not produce a large gap when the direct route is inexpensive.

\begin{proposition}[Failure to reach the goal under a no-build-on-ties convention]\label{prop:fail}
Modify the instance by drawing a nuisance bit $\nu$ once per episode and letting it confound the direct probe: $\Bdir$ reports $q \oplus \nu$ through the noisy channel, whereas $\Bprec$ still returns $q$ exactly. Consider a policy in $\Pid$ whose tie-breaking rule never prefers a zero-information constructive action to an equally valued non-constructive action, as in our experiments. On this capped instance, the policy does not reach $G$. Direct probes identify only $q \oplus \nu$, so the supremum confidence attainable about $q$ from direct probes is $1-\varepsilon' < 1-\varepsilon$. By Lemma~\ref{lem:indist}, the policy does not take the first build while direct probing has positive within-horizon value. It therefore remains below the confidence threshold regardless of how long it probes. An implementation that stops when residual EIG falls below a tolerance halts outside $G$; an implementation without such a stopping rule continues probing outside $G$. The optimal policy builds the chain, queries $\Bprec$, and reaches confidence $1$.
\end{proposition}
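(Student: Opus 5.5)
We prove Proposition~\ref{prop:fail} in three steps: fix the modified instance, bound the confidence reachable with direct probes alone, and use Lemma~\ref{lem:indist} to confine the policy to direct probes. For the instance, $q$ is uniform and the nuisance bit $\nu$ is drawn once per episode with $\Pr[\nu=1]=\eta\in(0,1/2)$. We choose $\eta$ so that $1-\eta<1-\varepsilon$, i.e.\ $\eta>\varepsilon$, and take $\nu$ independent of $q$. $\Bdir$ returns $q\oplus\nu$ through the binary symmetric channel with bias $\gamma>0$. The chain has length $d+1$, and $\Bprec$ returns $q$ exactly. The optimal policy reaches $G$ at cost $(d+1)c_b+1$, as in Theorem~\ref{thm:unbounded}, so only the behavior of $\Pid$ policies needs proof.

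\textbf{Confidence cap.} The observation sequence $o_1,o_2,\dots$ depends on $(q,\nu)$ only through $z=q\oplus\nu$. Hence $q\to(q,\nu)\to z\to o_{1:n}$ is a Markov chain and $\Pr[q=1\mid o_{1:n}]=\sum_z \Pr[q=1\mid z]\Pr[z\mid o_{1:n}]$. Since $\Pr[q=z\mid z]=1-\eta$ for both values of $z$, the posterior on $q$ is a mixture of $1-\eta$ and $\eta$, so $\max(p,1-p)\le 1-\eta$ for every history. Setting $\varepsilon'=\eta$ gives the supremum $1-\varepsilon'<1-\varepsilon$, uniformly over all stopping times. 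Builds return $q$-independent observations and do not change this cap. Therefore a policy can enter $G$ only by executing $\Bprec$, which requires executing all of $b_1,\dots,b_{d+1}$.

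\textbf{Confinement.} We need $\Bdir$ to keep strictly positive within-horizon information about $q$, and this is the main obstacle. Once the posterior on $z$ is nearly certain, the residual information about $q$ could shrink to zero. If it reached exactly zero, the build and $\Bdir$ would tie, and this is the reason for the no-build-on-ties convention. The main step is to show that, at every reachable state, either $O^{\le d}_{s,\Bdir}$ carries strictly positive information about $q$, or it carries zero information and ties with the build. We show this as follows. The posterior on $z$ stays in $(0,1)$ after finitely many noisy observations, because $\gamma<1$ makes each likelihood ratio finite. Next observations are then a nondegenerate function of $z$, and $I(q;z)>0$ because $\eta<1/2$. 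By the data-processing chain, this gives positive conditional information about $q$. Condition (iii) of Definition~\ref{def:pid} then ranks $\Bdir$ strictly above $b_1$. In any degenerate case the two actions tie: Lemma~\ref{lem:indist} gives the build the value of $\oslash_{c_b}$, and the convention excludes the build.

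\textbf{Induction.} We induct on time. The state always satisfies $r(s)=d+1>d$, because no build has been taken. Lemma~\ref{lem:indist} applies at each step, so the policy selects $\Bdir$ again. By the cap, the policy never enters $G$. This yields the two reported behaviors: a residual-EIG threshold stopping rule halts outside $G$, since the cap makes $\Conf$ at most $1-\eta$, and a policy without such a rule probes forever outside $G$.
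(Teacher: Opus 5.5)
Your proof is correct and follows the paper's route. Direct probes identify only $q\oplus\nu$, which caps confidence at $1-\varepsilon'<1-\varepsilon$; Lemma~\ref{lem:indist}, with the tie convention as a fallback, keeps the policy on $\Bdir$ by induction; and the build-then-$\Bprec$ route reaches confidence $1$.

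Your mixture bound $p\in[\eta,1-\eta]$ is tighter than the paper's limiting-posterior argument, because it holds at every finite history rather than only in the limit. Your claim that $\Bdir$ keeps strictly positive information at every reachable state is also true, which means the tie convention is never actually triggered in the exact model. It does not follow from data processing, though, since that only gives an upper bound. It holds because $z$ is binary and nondegenerate under the posterior, and both links $q\to z$ ($\eta\neq\tfrac12$) and $z\to o$ ($\gamma>0$) are informative.
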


\paragraph{Role of the tie-breaking condition.} The condition is necessary. As the information supplied by direct probes approaches zero, the build chain and a non-constructive action can become equal-valued under a bounded-horizon observation score. A policy allowed to break such ties in favor of construction could then complete the chain and unlock $\Bprec$. Proposition~\ref{prop:fail} therefore depends on both bounded-horizon valuation and a tie-breaking rule that does not select zero-information constructions. We state it as a proposition for this reason. Theorem~\ref{thm:unbounded} requires no tie-breaking convention because $\Bdir$ strictly dominates the first build.

\subsection{Exploration does not restore boundedness}\label{sec:explore}

The class $\Pid$ uses deterministic argmax selection, whereas deployed systems may add undirected exploration, such as choosing a uniformly random action with probability $\varepsilon_0$ or sampling from a softmax distribution. On the base instance $I(d, \gamma, c_b, \varepsilon)$, such exploration eventually selects $b_1$ with positive probability, and permanent capabilities allow the chain to be completed in expected time independent of $\gamma$. The unbounded separation therefore does not hold on the base instance. It does hold on the following enriched family.

\paragraph{Distractor-inflated instance $I_N$.} This instance is identical to $I(d, \gamma, c_b, \varepsilon)$ except that each chain position $i \in \{1, \dots, d{+}1\}$ also has $N$ \emph{distractor} constructive actions $\delta_{i,1}, \dots, \delta_{i,N}$. Each distractor costs $c_b$, has the same precondition as $b_i$, and grants a capability required by no experiment. Distractors are repeatable, so the action set does not shrink. The optimum is unchanged: $\OPT \le (d{+}1)c_b + 1$.

\begin{definition}[$\varepsilon_0$-exploring $\Pid$ policy]\label{def:explore}
At each step, with probability $1 - \varepsilon_0$ select according to some $\Pid$ rule; with probability $\varepsilon_0$ select uniformly from the available actions.
\end{definition}

\begin{proposition}[Exploration does not restore boundedness]\label{prop:explore}
For every $d \ge 1$, $\varepsilon_0 \in (0, 1]$, and $\rho > 1$, there exist $\gamma, N, \varepsilon$ such that on $I_N(d, \gamma, c_b, \varepsilon)$ every $\varepsilon_0$-exploring $\Pid$ policy incurs expected cost at least $\rho \cdot \OPT$.
\end{proposition}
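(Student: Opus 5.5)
The plan is to reduce the exploring policy to a race between two clocks. The first is the time $T_b$ at which exploration first selects $b_1$. The second is the time $T_G$ at which direct probing alone would carry the belief into $G$. The $\Pid$ component can never end this race early, so the cost is at least the cost of the first $\min(T_b,T_G)$ steps. I will make both clocks long with constant probability: $\gamma$ small makes $T_G$ long, and $N$ large makes $T_b$ long. Since $\OPT \le (d{+}1)c_b+1$ does not depend on $\gamma$ or $N$, this gives the ratio $\rho$.

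\textbf{Step 1 (the greedy component never advances the chain).} In every state reached before $b_1$ has been executed, we have $r(s)=d+1>d$. Lemma~\ref{lem:indist} then ranks $\Bdir$ strictly above $b_1$. The same argument applies to every distractor $\delta_{1,j}$: it grants a capability that no experiment uses, so its within-horizon observation ensemble is $q$-independent, and condition (iii) of Definition~\ref{def:pid} ranks $\Bdir$ strictly above it. So before $b_1$, the $\Pid$ component always plays $\Bdir$. The only way to leave this regime is an exploration step, and this holds regardless of the information already gathered, because the lemma holds in every such state.

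I deliberately make no claim about what happens after $b_1$. Once $r(s)\le d$, the precision probe may enter the horizon and the greedy rule may finish the chain. The bound therefore only counts cost incurred before $T_b$.

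At any state before $b_1$, the available actions are $\Bdir$, $b_1$, and the $N$ distractors $\delta_{1,j}$. Later-position distractors are unavailable because their preconditions require $b_1$. Each step therefore selects $b_1$ independently with probability exactly $\varepsilon_0/(N+2)$. So $T_b$ is geometric, and
\[ \Pr[T_b > K] \ge 1 - K\varepsilon_0/(N+2) \ge 1/2 \quad\text{once } N \ge 2K\varepsilon_0 . \]

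\textbf{Step 2 (direct probes are slow with constant probability).} Before $T_b$, all information about $q$ comes from $\Bdir$. Let $\lambda=\ln\frac{1+\gamma}{1-\gamma}=\Theta(\gamma)$. The posterior log-odds after $k$ probes is a random walk $S_k$ with steps $\pm\lambda$ and drift $\gamma\lambda$ per step. Reaching $G$ requires $|S_k|\ge L:=\ln\frac{1-\varepsilon}{\varepsilon}$.

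Set $K = cL/\gamma^2$ for a small constant $c$. The drift term then contributes at most $K\gamma\lambda \le L/2$. A maximal (Doob/Hoeffding--Azuma) inequality on the centered martingale gives
\[ \Pr\Big[\max_{k\le K}|S_k - \text{drift}|\ge L/2\Big] \le 2\exp\big(-L^2/(8K\lambda^2)\big). \]
This is at most $1/4$ for $c$ small and $\varepsilon$ fixed. Hence $\Pr[T_G\le K]\le 1/4$.

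This high-probability version of Wald's converse is the step I expect to need the most care. The expectation bound used in Theorem~\ref{thm:unbounded} is not enough here, because we must intersect with the independent event $\{T_b>K\}$. I would try the martingale maximal inequality first. As a fallback, I would use a Bretagnolle--Huber bound on the law of the first $K$ direct observations under $q=0$ versus $q=1$, whose KL divergence is $K\cdot\Theta(\gamma^2)$.

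\textbf{Step 3 (combine).} Exploration randomness is independent of the probe outcomes, and every step costs at least $\min(1,c_b)$. Therefore
\[ \mathbb{E}[\text{cost}] \ge \min(1,c_b)\,K\,\Pr[T_b>K,\ T_G>K] \ge \min(1,c_b)\,K/4 . \]
The factor $1/4$ follows because $\Pr[T_b>K,\ T_G>K]\ge \Pr[T_b>K]-\Pr[T_G\le K]\ge 1/2-1/4$.

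Now fix $d$, $c_b$, $\varepsilon$, and $\varepsilon_0$. Choose $\gamma$ small enough that $\min(1,c_b)\,cL/(4\gamma^2) \ge \rho\big((d{+}1)c_b+1\big)$. Then set $N=\lceil 2K\varepsilon_0\rceil$.

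A minor point is that the policy might stop before $G$, for example on an EIG tolerance. I handle this by measuring cost to the goal, as in Theorem~\ref{thm:unbounded}, or equivalently by counting such stopping as failure with infinite cost, so it only strengthens the bound.
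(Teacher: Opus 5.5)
Your proof is correct and has the same architecture as the paper's sketch. The $\Pid$ component never takes $b_1$ (Lemma~\ref{lem:indist}), so the construction clock advances only through exploration, at rate exactly $\varepsilon_0/(N{+}2)$ because distractors are repeatable. The probe clock needs $\Theta(\log(1/\varepsilon)/\gamma^2)$ direct probes. A union-type bound then leaves constant probability that neither route finishes by time $K$, and $\OPT \le (d{+}1)c_b+1$ is independent of both $\gamma$ and $N$.

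The real difference is in the probe-route step. The paper invokes a finite-time change-of-measure converse \citep{kaufmann2016} without deriving it, and explicitly downgrades its argument to a sketch for that reason. You prove the needed statement directly. In this binary symmetric channel the posterior log-odds is an explicit $\pm\lambda$ random walk with drift $\gamma\lambda$, where $\lambda=\Theta(\gamma)$. With $K=cL/\gamma^2$ the drift stays below $L/2$, and the Azuma--Hoeffding maximal inequality bounds the centered fluctuation. This gives $\Pr[T_G\le K]\le 1/4$ once $c=c(\varepsilon)$ is small enough.

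You also correctly use $\Pr[T_b>K,\,T_G>K]\ge \Pr[T_b>K]-\Pr[T_G\le K]$, which needs no independence between exploration and probe outcomes. Together these turn the sketch into a complete proof with explicit constants.

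The change-of-measure route would buy generality. It lower-bounds any stopping rule with small error probability, not just first passage of the posterior across the particular threshold defining $G$. It would therefore survive changes to the goal definition or to channels whose log-odds is not a simple random walk. For the instance as defined, your elementary argument is sufficient.

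One small remark: the extension of Step~1 to distractors is unnecessary. You only need that the $\Pid$ component never selects $b_1$. If it played distractors instead of $\Bdir$, that would only reduce the number of probes by time $K$, and each step still costs at least $\min(1,c_b)$.
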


The proof sketch is given in Appendix~\ref{app:proofs}. By Lemma~\ref{lem:indist}, the $\Pid$ component never selects $b_1$ at the chain frontier. The first true build can therefore be selected only by exploration, with probability at most $\varepsilon_0/(N{+}2)$ per step. Independently, the direct-probe route requires $\Omega(\log(1/\varepsilon)/\gamma^2)$ samples. Taking $N \to \infty$ and $\gamma \to 0$ jointly makes the less expensive of these two routes arbitrarily costly relative to $\OPT$, which is independent of $N$. The same dilution applies to a Boltzmann policy over $\Pid$ values because the true build receives at most a $1/(N{+}1)$ share of the total build probability. Directed novelty or count-based exploration can reach the chain but cannot distinguish the required build from the distractors, so it may incur $\Theta(Nc_b)$ in unnecessary construction. Query-directed capability pricing avoids this failure by evaluating which capabilities are required for $q$: $\hcap$ uses $\mathrm{req\_meas}(\supp(q))$, and the distractor experiment in \S\ref{sec:explore-abl} shows that CG-Plan's build count remains equal to the true chain length as $N$ increases.

\subsection{Irreducibility}\label{sec:irreducibility}

\paragraph{Non-submodular value of information \citep{krauseguestrin2005,krauseguestrin2009}.} These results study subset selection over a fixed, fully available set of observations and do not include a lookahead horizon. Capability gating instead changes which actions exist in a state. Their approximation gaps are bounded under the stated assumptions, whereas the present gap is unbounded in $\gamma$ and the capped instance can fail to reach the confidence target. The parameters responsible for the separation---chain depth beyond a fixed horizon and the $1/\gamma^2$ direct-probe cost---have no counterparts in that formulation.

\paragraph{Adaptive submodular maximization \citep{golovin2011}.} Adaptive greedy obtains a $(1-1/e)$ approximation when the objective is adaptive submodular and the ground set is fixed and fully selectable. Capability gating violates the availability premise because $\Bprec$ cannot be selected until the construction chain is complete. The corresponding bounded-factor guarantee therefore does not apply.

\paragraph{Adaptive stochastic cover \citep{golovin2011}.} Cost-to-goal is a covering objective, making adaptive submodular cover the closest comparison. Its logarithmic approximation requires adaptive submodularity: expected marginal benefit, conditioned on observations to date, must be non-increasing. In the construction chain, the marginal contribution to confidence is zero for $b_1, \dots, b_d$ and becomes strictly positive at the unlock. Marginal benefit therefore increases along the chain, violating diminishing returns. The adaptive-submodular-cover guarantee does not apply; moreover, its approximation factor is bounded, whereas the capability-gating factor is unbounded.

\paragraph{Precedence-constrained stochastic probing and adaptivity gaps \citep{gns2017,bsz2019}.} These problems are structurally similar because they include precedence constraints and stochastic observations. Their principal comparison, however, is between adaptive and non-adaptive policies. The myopic-$d$ planner considered here is already adaptive because it replans after every observation, so it lies on the adaptive side of that comparison. Existing bounded adaptivity-gap results for submodular or XOS reward maximization under a budget do not characterize its cost relative to the adaptive optimum in this gated stochastic-covering problem.

\subsection{The dichotomy that forces a heuristic}\label{sec:dichotomy}

Increasing $d$ to the problem diameter does not provide a general solution. First, the lower bound has the quantifier structure $\forall d\,\exists\,\text{instance}$: for any fixed horizon, the construction chain can be made one step longer. Second, exact full-width lookahead grows exponentially with the horizon, and the required chain depth may be unknown. A practical alternative is a cost-to-go heuristic $h$ that estimates the remaining cost to reach $G$, including capabilities whose benefits lie beyond the current horizon, without explicitly expanding the full lookahead tree. Because such an estimate does not factor solely through within-horizon observations, it lies outside $\Pid$ and is not covered by Theorem~\ref{thm:unbounded}. Section~\ref{sec:explore-abl} measures the scaling difference on the testbed: full-width determinized lookahead grows by approximately a factor of $|A|$ per additional horizon step, whereas the relaxed-plan heuristic evaluates approximately $|A|$ candidates per decision. Randomized anytime search provides an intermediate alternative; \S\ref{sec:uct} evaluates it empirically and finds that it solves the witness with substantially greater per-decision search work that increases with chain depth and distractor count.

\subsection{Modeling assumptions}\label{sec:axioms}
\begin{enumerate}
\item Constructive experiments are uninformative about $q$ within the relevant horizon. This is a defining assumption of the clean separation, not an empirical claim. Mixed actions that both build a capability and provide partial information are outside the present result and require a new analysis of the probe-only argument.
\item In the regime of interest, the capability route is less costly than exhaustive direct probing. The unbounded statement concerns the instance family as $\gamma \to 0$; for any fixed $\gamma$, the ratio is finite, although it can be arbitrarily large across the family.
\item The precision probe is exact. Replacing it with a noisy probe changes the hard failure in Proposition~\ref{prop:fail} to a large but finite gap.
\item The result applies only to $\Pid$ and does not constrain planners with explicit capability value or sufficiently long-horizon planning. The class definition is therefore essential to the theorem.
\item The gate acts on the action space while the hypothesis language and query remain fixed. If constructive actions also expand the hypothesis language or query set, the same reachability principle may apply, but that richer setting requires a separate formalization.
\end{enumerate}

\section{CG-Plan: a cost-to-goal planner}\label{sec:cgplan}

The separation motivates a heuristic that accounts for capabilities beyond the lookahead horizon without requiring exhaustive search. We place this heuristic in a standard incremental-replanning loop. Capabilities remain action-space prerequisites: acquiring one changes which experiments can be reached. This is the setting covered by the theorem. Extending the same design to representational or hypothesis-generation capabilities would require enlarging the state; the implementation studied here plans only over an action-level capability dependency graph.

\paragraph{The heuristic as a relaxation.} We define $h(s; q)$ as the optimal cost of a delete-relaxation of the SSP, following classical planning. The relaxation assumes that constructive edges always succeed, each experiment returns its most decisive outcome, and anomalies do not interact. Under these assumptions, the stochastic AND/OR graph becomes a deterministic shortest-path problem over capability and information states. Because the relaxed problem is optimistic, its optimal cost lower-bounds the true expected cost to the goal. Admissibility follows from the relaxation rather than from a post hoc argument. In the chain-gated setting, the relaxed cost decomposes into capability and experimentation terms.

\paragraph{Capability term $\hcap$.} Reaching $G$ requires measuring the components of $\supp(q)$, some of which may require capabilities not contained in $I(s)$. On the deterministic capability graph defined by $\req$ and $\gain$, we compute the least-cost build subgraph that grants all missing capabilities from the current frontier, counting each constructive edge once:
\[ \hcap(s) = \min\ \text{cost of a build-DAG covering } \mathrm{req\_meas}(\supp(q)) \setminus I(s). \]
For a chain-structured graph, this has the closed form $\hcap(s) = c_b \cdot (\text{deepest required depth} - \text{current frontier depth})_+$. It is admissible because every policy that reaches $G$ must acquire the required capabilities, constructive edges are their only source, and all costs are positive. The term is computed by relaxed planning on the capability graph rather than by horizon expansion. The computational complexity depends on graph structure. For chains and tree-like graphs, the least-cost build subgraph can be obtained exactly by dynamic programming or shortest path. For general capability hypergraphs with multiple preconditions and grants, the problem becomes a minimum-cost directed Steiner problem and can be NP-hard. In that setting, CG-Plan uses a relaxed-plan approximation; admissibility is retained only when the approximation is a lower bound on the true construction cost.

\paragraph{Experimentation term $\hexp$.} After the required capabilities are assumed available, the posterior must still cross the confidence threshold. Using the SPRT lower bound from \S\ref{sec:separation}, we divide the remaining log-odds distance by the best information-per-cost rate among discriminating experiments that will become available:
\[ \hexp(s) = \big[\log\tfrac{1-\varepsilon}{\varepsilon} - |\mathrm{LLR}_s(q)|\big]_+ \big/ \max_e\big(\KL_e / c(e)\big). \]
In the binary-hypothesis witness, $\max_e(\KL_e/c(e))$ is a uniform upper bound on achievable information per unit cost over future beliefs. Dividing by this rate therefore underestimates the remaining experimentation cost and yields an admissible lower bound. For general adaptive design, an experiment may become more informative as the posterior changes, so this uniform-bound argument may fail. Outside the binary setting, $\hexp$ should therefore be interpreted as a practical heuristic rather than a guaranteed lower bound.

\paragraph{Relation to the lower bound.} The term $\hcap$ depends on the capability graph and the current set $I(s)$ rather than on a within-horizon observation distribution. It therefore assigns value to constructive actions whose downstream benefit is invisible to $\Pid$, placing $h$ outside the myopic class. At the same time, $\hcap$ is obtained from a relaxed shortest-path computation rather than from explicit deep lookahead. It thus provides a tractable estimate of beyond-horizon capability cost.

\paragraph{Path awareness.} The stated form of $\hcap$ is admissible only when the target is genuinely gated and construction is necessary. If the target can also be measured inexpensively without building, unconditional inclusion of the build cost overestimates the cost to the goal. The practical heuristic therefore compares two estimated routes---build then resolve, and resolve directly---and uses the less costly one. This preserves the construction option under gating while avoiding unnecessary builds in the leaky regime.

\paragraph{Additivity caveat.} The sum $\hcap + \hexp$ assumes a sequential cost model in which building and experimentation do not occur in parallel. If resources permit the two components to overlap in time, an admissible combination is $\max(\hcap, \hexp)$ rather than their sum. The guarantees below assume the sequential model.

\paragraph{The planner.} Stochastic interventional edges are determinized in the style of PPCP: the planner assumes the decisive outcome and replans when the observed outcome differs. Per-node bookkeeping follows LPA*/D*~Lite, with values $g$ and $rhs$ and key $k(u) = [\min(g, rhs) + h(u),\ \min(g, rhs)]$. Search is oriented backward from a virtual goal connected to every state satisfying $G$. This orientation accommodates the moving start state, which changes after every experiment. When edge costs change or new edges are proposed, only affected vertices are updated. At each step, the agent executes the first edge of the current plan, incorporates the outcome into the belief and capability set, and replans. CG-Plan therefore adapts established LPA*/D*~Lite incremental-replanning machinery \citep{koenig2004lpastar,koenig2002dstarlite} to belief space; the new elements are the discovery formulation and the capability-aware heuristic.

\paragraph{Inherited guarantees under idealized assumptions.} With deterministic outcomes, a realizable target, a complete proposer that supplies every edge needed by an optimal plan, and an admissible consistent heuristic, A*/LPA* optimality implies that the executed cost equals the optimal SSP cost. If $h$ is $(1+\varepsilon)$-admissible, the cost is at most $(1+\varepsilon)\OPT$. Under stochastic outcomes satisfying PPCP's clear-preference condition, PPCP's guarantees imply that the goal is reached with probability one. These assumptions need not hold in practice. The experiments therefore evaluate the practical effects of proposal completeness, heuristic error, determinization, and replanning separately.

\paragraph{Scope of the implemented planner.} We use a hand-coded heuristic to isolate the planning contribution from learned components, and a scripted complete proposer to isolate it from proposal quality. Natural extensions include learning $h$ from hindsight-labeled cost-to-go using capability-graph distance, log-odds gap, and realized information rate, and replacing the scripted proposer with a frozen language model. We defer these extensions. Extending $\hcap$ to state-dependent hypothesis or query languages would also require a different heuristic that estimates the cost of reaching an expressible representation rather than only a measurable component.

\section{Empirical illustration}\label{sec:experiments}

The experiments provide controlled tests of the separation mechanism. They evaluate whether information-based selectors prefer weak but immediately informative probes to zero-information builds, and whether CG-Plan constructs a capability when that route has lower estimated cost to the goal. These experiments do not establish that real scientific discovery problems are capability-gated; evaluating that claim would require rediscovery backtests of the kind discussed in \S\ref{sec:discussion}.

\subsection{Testbed and selectors}\label{sec:testbed}

\paragraph{Testbed.} The hidden system $M^*$ is a Boolean circuit with input, internal, and terminal nodes. Terminal nodes can be observed at low cost, but the internal subcircuit affects them only through a weak leakage path scaled by $\gamma$. At small $\gamma$, terminal probes therefore provide little information about the deeper structure. Internal nodes are organized into depth classes. Measuring a class-$j$ node requires tool $j$, and constructing tool $j$ requires tool $j-1$, yielding a capability chain of length $c$; each level also contains $N$ useless distractor builds. The query asks for the input--output \emph{function} of a deepest-class node, which is identifiable once the relevant node can be measured. We do not ask for exact wiring, which is not identifiable from the available probes. The agent maintains a belief ensemble over circuits and reweights it using the exact Gaussian likelihood. All methods use the same belief, archive, capability set, and experiment interface. Build outcomes are independent of the hidden circuit and therefore carry exactly zero information about the query, so capability indistinguishability holds numerically rather than only asymptotically. Because the capability graph is a chain, $\hcap$ has the exact closed form given in \S\ref{sec:cgplan} and is admissible in every reported run. The general-hypergraph approximation is not evaluated.

\paragraph{Methods and configuration.} Five selectors operate on the same harness, so their results differ only through action selection. \emph{Random-probe} samples uniformly from interventional probes and serves as a deliberately weak probing baseline. \emph{Plausibility-style} selects the probe that best separates the two leading hypotheses. \emph{Greedy EIG} maximizes one-step expected information gain about $q$ per unit cost, estimated by nested Monte Carlo. \emph{$H$-step EIG} is the bounded-horizon planner defined in \S\ref{sec:class}. \emph{CG-Plan} uses the path-aware form of $h = \hcap + \hexp$. Build actions are included in the candidate set for every value-based method---plausibility-style, greedy EIG, $H$-step EIG, and CG-Plan---so each method can construct a capability. Random-probe is the only method restricted to probes. Under capability indistinguishability, a build has exactly zero measured EIG, whereas a weak terminal probe has positive EIG; information-based rules therefore do not select the build. All methods use the same proposer: a fixed near-miss pool except in the dynamic-proposer experiment, where they share the same archive-consistent proposer. Ties in information scores are resolved in favor of the lower-cost action and never in favor of a zero-information build. The $H$-step planner computes determinized horizon-$H$ EIG exactly. It commits to construction only when the precision probe lies within the horizon; otherwise, it reduces to the greedy-EIG decision. The selected action is then simulated in the true circuit. Each experimental cell contains independently generated worlds, and results are reported as means with 95\% confidence intervals where applicable. Appendix~\ref{app:repro} gives sample counts and estimator-stability checks. Greedy EIG uses 300 outer samples per candidate by default and 80--120 in selected multi-world sweeps; all transition-band cells in Table~\ref{tab:gamma} use 300 samples. Rankings are stable in saturated regimes, whereas success within the transition band is sensitive to Monte Carlo error, as described in Result~2.

\subsection{Main separation}

\paragraph{Result 1: main separation.} In the gated regime ($\gamma = 0.02$, chain length $c=2$, $C_{\mathrm{build}} = 3$, and budget $30$), CG-Plan reaches the confidence target in all 100 independently generated worlds and reports the correct function in all 100. Its mean cost is $9.2 \pm 0.2$ (95\% CI), with $2.00 \pm 0.00$ builds and $3.2 \pm 0.2$ probes. Every myopic method completes zero builds, and none reaches the goal within budget: greedy EIG succeeds in 0/100 worlds, and the other information-based baselines succeed in 0/20 worlds in the original pass. Greedy EIG spends the full budget on probes. The construction chain itself costs $c\,C_{\mathrm{build}} = 6$, so CG-Plan's mean cost is the required build cost plus approximately three probes. Costs for the myopic methods are censored at the budget because they do not reach $G$. This finite-budget result matches the theorem's mechanism: as $\gamma$ decreases, direct resolution becomes increasingly expensive, whereas the build-then-measure route remains nearly constant. The number of builds before resolution directly reflects Lemma~\ref{lem:indist}: the myopic methods complete none, whereas CG-Plan completes exactly the required chain of length $c$.

\subsection{Gating controls: leakage and no-gating}

\paragraph{Result 2: dependence on the gating parameter.} We vary $\gamma$ while normalizing the target's leakage footprint so that no other quantity changes. CG-Plan reaches the goal in every world at approximately constant cost across the sweep. Greedy EIG succeeds in 0\% of worlds at small $\gamma$, transitions near $\gamma \approx 0.04$, and reaches 100\% at large $\gamma$. When greedy EIG succeeds in the visible regime, it is less costly than CG-Plan. Thus the separation occurs specifically when weak leakage makes the direct route expensive; it disappears when the target is readily observed. Saturated cells contain 8 worlds. The four transition cells ($\gamma = 0.03$--$0.06$) contain 32 worlds, use 300 outer EIG samples, and report Wilson 95\% intervals in Figure~\ref{fig:gamma} and Table~\ref{tab:gamma}. Across this band, greedy success increases monotonically from $0.12$ to $0.84$ and reaches $1.00$ by $\gamma = 0.08$, while CG-Plan remains at $1.00$. These estimates are consistent with the original 16-world pass. Within the transition band, candidate EIG values are close, so Monte Carlo variation affects the selected probe and the measured success rate. A separate 32-world run with 80 outer samples produced success rates $(0.03, 0.50, 0.84, 1.00)$ across $\gamma = 0.03$--$0.06$. We therefore report the estimator setting explicitly and interpret the exact transition-band rates as estimator-dependent. The location and monotone direction of the transition are stable across runs.

\begin{figure}[h]
\centering
\includegraphics[width=0.72\linewidth]{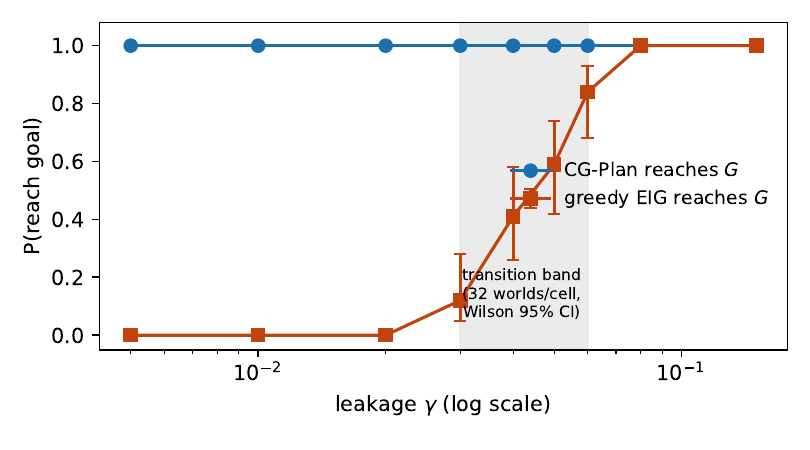}
\caption{Success as a function of leakage $\gamma$. Greedy EIG transitions from $0$ to $1$ across a narrow range (shaded; 32 worlds per transition cell, with Wilson 95\% intervals), whereas CG-Plan remains at $1.00$. Table~\ref{tab:gamma} reports the numerical values.}
\label{fig:gamma}
\end{figure}

\begin{table}[h]
\centering
\small
\caption{Regime sweep. Greedy EIG succeeds only when leakage is sufficiently strong, whereas CG-Plan reaches the goal throughout. Transition cells ($\gamma=0.03$--$0.06$) contain 32 worlds, use $n_{\mathrm{outer}}=300$, and report Wilson 95\% intervals. Saturated cells contain 8 worlds. A dash denotes a quantity not logged in that pass. CG-Plan's reported answer was correct in every logged run, including all 100 runs in Result~1.}
\label{tab:gamma}
\begin{tabular}{@{}lccc@{}}
\toprule
$\gamma$ & greedy reaches $G$ & CG-Plan reaches $G$ & CG-Plan correct \\
\midrule
0.005 & 0.00 & 1.00 & 1.00 \\
0.010 & 0.00 & 1.00 & 1.00 \\
0.020 & 0.00 & 1.00 & 1.00 \\
0.030 & 0.12 [0.05, 0.28] & 1.00 & -- \\
0.040 & 0.41 [0.26, 0.58] & 1.00 & -- \\
0.050 & 0.59 [0.42, 0.74] & 1.00 & -- \\
0.060 & 0.84 [0.68, 0.93] & 1.00 & -- \\
0.080 & 1.00 & 1.00 & 1.00 \\
0.150 & 1.00 & 1.00 & 1.00 \\
\bottomrule
\end{tabular}
\end{table}

\paragraph{No-gating control.} With a strong terminal footprint, corresponding to large $\gamma$, the target is inexpensive to observe directly. Greedy EIG reaches the goal without construction at cost approximately $3$, and path-aware CG-Plan selects the same direct route. CG-Plan is therefore not constrained to build; its decision depends on which route has lower estimated cost to the goal. The performance gap is present under gating and absent in this control.

\subsection{Every finite horizon is defeated}

\paragraph{Result 3: finite-horizon boundary.} We vary the lookahead horizon $H$ and the world's construction-chain length $c$. The $H$-step planner reaches the goal if and only if $c < H$, producing the upper-triangular boundary in Table~\ref{tab:quantifier}. CG-Plan, whose capability term does not depend on a lookahead horizon, solves every chain length. Here $H$ includes the precision probe as one lookahead step, so a chain of length $c$ becomes visible when $c+1 \le H$, equivalently $c < H$. This matches the theorem: a chain of length $d+1$ is invisible to horizon $d$. The experiment therefore realizes the quantifier structure $\forall d\,\exists\,\text{instance}$. The bounded-horizon commitment decision is computed on the determinized model, and its consequences are simulated in the true circuit. When the planner selects the chain, the belief resolves; when it continues direct probing, it does not reach the goal.

\begin{table}[h]
\centering
\small
\caption{Quantifier grid: goal reached (1) or not (0). Five worlds per cell; every cell is unanimous. The $H$-step planner succeeds iff $c<H$; CG-Plan succeeds for every chain length.}
\label{tab:quantifier}
\begin{tabular}{@{}lcccc@{}}
\toprule
 & $c=1$ & $c=2$ & $c=3$ & $c=4$ \\
\midrule
$H=1$ & 0 & 0 & 0 & 0 \\
$H=2$ & 1 & 0 & 0 & 0 \\
$H=3$ & 1 & 1 & 0 & 0 \\
$H=4$ & 1 & 1 & 1 & 0 \\
$H=5$ & 1 & 1 & 1 & 1 \\
CG-Plan & 1 & 1 & 1 & 1 \\
\bottomrule
\end{tabular}
\end{table}

\subsection{Dynamic proposer and near-miss hypotheses}

\paragraph{Result 4: near-miss hypotheses under dynamic proposal.} The separation requires live hypotheses that agree on inexpensive observations but differ on the gated target. To test whether this structure is an artifact of a fixed hypothesis pool, we replace the fixed proposer with a dynamic proposer that returns hypotheses consistent with the current archive. At small $\gamma$, terminal observations weakly constrain the deep target, so archive-consistent hypotheses continue to agree at the terminals while differing on the target. In the canonical world, the proposer returns 22 candidate target functions with an empty archive, 21 after 16 terminal probes, and 1 after 6 gated internal probes. Across worlds, it returns 22--35 candidates initially; terminal data removes at most one, whereas gated data reduces the set to 1--2. Thus the proposer's uncertainty about $q$ persists until it receives gated observations. With fully dynamic proposal, the separation occurs in all 20 worlds and CG-Plan reports the correct answer in all 20. These results show that the required near-miss structure also arises from archive-consistent proposal under gating.

Result~4 also motivates a broader reachability question. In the present experiments, the hypothesis language is fixed and gating determines which measurements constrain the live hypotheses. In open-ended discovery, a constructive action may instead make a new representation, primitive, or hypothesis class expressible. We treat that setting as a possible extension rather than as a result of this paper.

\subsection{Robustness and scaling}

\paragraph{Result 5: distractor robustness.} Adding \emph{distractor} builds that grant capabilities irrelevant to $q$ does not change CG-Plan's behavior. For $N_{\mathrm{distractors}} \in \{0,2,4,8\}$, its cost remains $9.2 \pm 0.3$ (20 worlds per cell), and it completes exactly two builds in every world. Greedy EIG fails in all 12 worlds at every value of $N$. The capability term explains this behavior: $\hcap$ computes the least-cost build subgraph that covers the capabilities required by $q$, so off-path capabilities do not enter the estimate.

\paragraph{Scaling in the chain testbed.} In this controlled family, the number of executed planner steps grows approximately linearly with chain depth. For $c \in \{1,2,3,4\}$, the mean step counts are $4.8$, $5.2$, $6.4$, and $8.0$, respectively, and the goal is reached in every run. These counts consist of approximately $c$ builds plus a small, nearly constant number of probes. This result is specific to the chain testbed, where $\hcap$ has a closed form and candidate evaluation is linear in the action set. General capability DAGs may require substantially more expensive relaxed planning.

\subsection{Exploration, ablation, and the price of lookahead}\label{sec:explore-abl}

\paragraph{Result 6: undirected exploration under distractors.} This experiment evaluates Proposition~\ref{prop:explore}. An $\varepsilon$-greedy selector with $\varepsilon_0 = 0.1$ chooses uniformly among available actions during exploration and otherwise follows the plausibility selector, which belongs to $\Pid$. We vary the distractor count $N$ under a budget of $300$, ten times the standard budget, so successful runs can be measured rather than immediately censored (Table~\ref{tab:explore}). When the direct-probe route remains open ($\gamma = 0.02$), exploration reaches the goal in every world through probing, but at 11--19 times the cost of CG-Plan. Its mean number of builds rises from $0.6$ to $13.1$ as $N$ increases from $0$ to $64$, with most additional builds spent on distractors. When the probe route is effectively closed ($\gamma = 0.005$), exploration fails in all runs: 0/10 at $N=0$, where it completes only one of the two required builds on average, and 0/10 at $N \in \{16,64\}$, where it completes 15 and 24 builds, respectively, mostly distractors. These builds consume 45 and 72 units of the 300-unit budget. CG-Plan reaches the goal in every world at cost approximately $9.4$ with exactly two builds. In this testbed, a distractor is removed after it is built, so the action set shrinks over time. This makes exploration easier than in the repeatable-distractor instance of \S\ref{sec:explore}; failure occurs despite that advantage.

\begin{table}[h]
\centering
\small
\caption{$\varepsilon$-greedy exploration ($\varepsilon_0 = 0.1$) vs distractor count $N$, budget 300, ten worlds per cell. With the probe route open exploration succeeds only expensively via probing; with it closed, exploration is diluted by decoys and fails.}
\label{tab:explore}
\begin{tabular}{@{}llccccc@{}}
\toprule
$\gamma$ & $N$ & \multicolumn{3}{c}{$\varepsilon$-greedy} & \multicolumn{2}{c}{CG-Plan} \\
\cmidrule(lr){3-5} \cmidrule(lr){6-7}
 & & goal & cost & builds & goal & cost \\
\midrule
0.02 (probe route open) & 0 & 1.00 & 139.2 & 0.6 & 1.00 & 9.4 \\
 & 4 & 1.00 & 99.0 & 1.9 & 1.00 & 9.4 \\
 & 16 & 1.00 & 134.6 & 5.8 & 1.00 & 9.4 \\
 & 64 & 1.00 & 177.9 & 13.1 & 1.00 & 9.4 \\
0.005 (probe route closed) & 0 & 0.00 & 300.0 & 1.0 & 1.00 & 9.4 \\
 & 16 & 0.00 & 300.0 & 15.0 & 1.00 & 9.4 \\
 & 64 & 0.00 & 300.0 & 24.0 & 1.00 & 9.4 \\
\bottomrule
\end{tabular}
\end{table}

\paragraph{Result 7: heuristic ablation.} We evaluate four modified heuristics in both a gated and a no-gating regime (Table~\ref{tab:ablation}). The capability term $\hcap$ is responsible for selecting construction: the experiment-only variant never builds and fails in every gated world, matching the greedy-EIG baseline. The experimentation term $\hexp$ is responsible for selecting informative probes after construction: the capability-only variant completes the correct chain in every gated world but then stalls because all post-build successor states have the same zero heuristic, causing the one-step argmin to select inexpensive but uninformative probes. The path-aware comparison prevents unnecessary construction. The core variant $\hcap + \hexp$, without comparison to the direct route, solves the gated cell but completes two unnecessary builds in the no-gating cell, increasing cost from $4.1$ to $9.2$. The $h=0$ control fails in every gated world and in 19 of 20 no-gating worlds. This final row also clarifies the implementation: the evaluated planner uses the one-step rule $\arg\min_a\, c(a) + h(\mathrm{succ}(a))$ on the relaxed model, so all directional guidance comes from $h$. Separating the graph formulation from the heuristic would require evaluating the full incremental-search implementation; the $h=0$ row cannot make that distinction.

\begin{table}[h]
\centering
\small
\caption{Heuristic ablation, 20 worlds per cell, budget 30, $c = 2$, $N = 2$. Goal / mean cost / mean builds.}
\label{tab:ablation}
\begin{tabular}{@{}lcccccc@{}}
\toprule
 & \multicolumn{3}{c}{gated ($\gamma = 0.02$)} & \multicolumn{3}{c}{no-gating ($\gamma = 0.15$)} \\
\cmidrule(lr){2-4} \cmidrule(lr){5-7}
variant & goal & cost & builds & goal & cost & builds \\
\midrule
full (path-aware) & 20/20 & 9.2 & 2.0 & 20/20 & 4.1 & 0.0 \\
core ($\hcap + \hexp$) & 20/20 & 9.2 & 2.0 & 20/20 & 9.2 & 2.0 \\
cap-only ($\hcap$) & 0/20 & 30.0 & 2.0 & 1/20 & 28.9 & 2.0 \\
exp-only ($\hexp$) & 0/20 & 30.0 & 0.0 & 20/20 & 4.1 & 0.0 \\
$h = 0$ & 0/20 & 30.0 & 0.0 & 1/20 & 28.6 & 0.0 \\
\bottomrule
\end{tabular}
\end{table}

\paragraph{Measured cost of full-width lookahead.} On the Result~1 action set, which has $|A| = 11$ actions ($8$ terminal probes and $3$ builds), full-width determinized search expands $135$, $1{,}571$, and $18{,}661$ nodes per decision at horizons $H=2$, $3$, and $4$. The measured growth factor is approximately $|A|$ per additional horizon step, consistent with $|A|^H$, whereas CG-Plan evaluates approximately $|A|$ candidates per decision at every chain length. The absolute enumeration cost remains modest in this deliberately small testbed: even $H=5$ requires only about $1.6 \times 10^5$ nodes. The experiment therefore demonstrates the scaling rate rather than a prohibitive runtime at this action-set size. With hundreds of candidate experiments, the same $|A|^H$ dependence becomes substantially more costly while the relaxed-plan heuristic remains linear in $|A|$. The $H$-step baseline used for the success results in Table~\ref{tab:quantifier} employs an oracle-efficient shortcut that makes the same construction decision as full-width determinized search without enumerating the tree. The success results are therefore unchanged; the node counts reported here come from genuine full-width enumeration.

\subsection{Generic rollout search}\label{sec:uct}

Randomized anytime search lies between bounded-horizon information selection and exhaustive lookahead, so it is not covered by either part of the preceding comparison. We evaluate a single-player UCT planner in the same determinized belief model used to derive CG-Plan's heuristic. The planner receives the identical capability graph, deterministic build dynamics, decisive-outcome KL values recomputed at the current belief, and the true set of actions available at the root; it replans after every executed action. Rollouts are uniform random and limited to 18 steps, with average backup, goal reward 100, and exploration constant 30. This favorable configuration isolates the difference between generic search and the capability-aware heuristic under shared dynamics. Evaluation criteria were specified before the runs: a cell is considered solved when at least 9 of 10 worlds reach the goal; the crossover is the smallest simulation budget meeting this threshold; and computation is measured in model-step operations per decision, excluding the approximately 60 KL belief evaluations shared by both methods. Wall-clock time is also reported, although the shared KL computation dominates at low simulation budgets.

\paragraph{Result 8: rollout-search cost under gating.} Table~\ref{tab:uct} reports the simulation-budget sweeps. In the standard cell, UCT first solves the task at 300 simulations per decision, corresponding to $2{,}643$ model steps. In the hard cell, the crossover is $1{,}000$ simulations and $11{,}471$ model steps. At chain depth $5$, the crossover rises to $3{,}000$ simulations and $30{,}486$ model steps; budgets of 300 and $1{,}000$ solve only 6/10 and 7/10 worlds. For $c \in \{1,\dots,4\}$ with $N=2$, the crossover budget remains 300, but per-decision work increases from $2{,}279$ to $4{,}459$ model steps before rising by an order of magnitude at $c=5$. Distractors increase the required budget independently: at $c=4$, the crossover is 300 simulations for $N=2$ and $1{,}000$ for $N=8$. By comparison, CG-Plan uses approximately 11--19 heuristic evaluations per decision at every depth. At $c=5$, it reaches the goal in all 10 worlds at cost $18.4 \pm 0.7$, completes exactly five builds, and requires 184 ms per step. By the reported operation counts, search at the crossover uses two to three orders of magnitude more per-decision work, with the difference increasing along both gating dimensions. Below the crossover, performance changes abruptly. At 100 simulations in the hard cell, UCT completes the full four-build chain in every world but reaches the goal in none, paralleling the capability-only ablation. At 300 simulations, it completes 5.20 builds on average despite requiring only four, indicating distractor construction. In the no-gating control, UCT reaches the goal in all 10 worlds at cost $5.6$ but averages 0.40 unnecessary builds per world, accounting for most of its cost premium over path-aware CG-Plan (cost $4.6$, zero builds).

\paragraph{Interpretation.} At $10{,}000$ simulations per decision, UCT closely matches CG-Plan in the hard cell: costs are $15.4$ and $15.3$, respectively, and both complete exactly four required builds. Thus the computational claim in \S\ref{sec:dichotomy} is an exponential-cost statement about exhaustive lookahead, not an impossibility result for sampled search. On this witness, generic rollout search recovers the capability route with sufficient computation. The empirical distinction has three parts: the required search work grows with chain depth and distractor count; performance exhibits a budget threshold below which the route may be constructed but not exploited; and sampled search does not provide the explicit cost lower bound or deterministic build restraint supplied by the heuristic. UCT also succeeds only because its transition model includes the capability graph and construction dynamics, which are the same structures used by $\hcap$. The capability-aware model is therefore necessary for both approaches to escape Theorem~\ref{thm:unbounded}. We did not evaluate a raw belief-space POMCP solver without these structures, so the comparison is limited to search versus heuristic under a shared determinized model.

\begin{table}[h]
\centering
\small
\caption{UCT on the determinized belief MDP compared with CG-Plan. Each row contains 10 worlds with budget 30; a cell is solved when at least 9/10 worlds reach the goal. Model evaluations are model-step operations per decision and exclude the approximately 60 shared KL evaluations. The crossover budgets are 300 simulations in the standard cell and $1{,}000$ in the hard cell; the $c=5$, $N=2$ scaling run crosses at $3{,}000$.}
\label{tab:uct}
\begin{tabular}{@{}llccccc@{}}
\toprule
cell & sims/step & goal & cost & builds & model-evals/step & ms/step \\
\midrule
std ($c{=}2, N{=}2$) & 30 & 0.00 & 30.0 & 1.1 & 469 & 554 \\
 & 100 & 0.00 & 30.0 & 2.1 & 937 & 707 \\
 & 300 & \textbf{1.00} & 10.1 & 2.0 & 2{,}643 & 642 \\
 & 1{,}000 & 1.00 & 10.3 & 2.0 & 5{,}027 & 663 \\
 & CG-Plan & 1.00 & 9.4 & 2.0 & -- & 408 \\
\midrule
hard ($c{=}4, N{=}8$) & 100 & 0.00 & 30.1 & 4.0 & 1{,}897 & 513 \\
 & 300 & 0.70 & 23.6 & 5.2 & 5{,}088 & 586 \\
 & 1{,}000 & \textbf{1.00} & 17.5 & 4.3 & 11{,}471 & 694 \\
 & 10{,}000 & 1.00 & 15.4 & 4.0 & 39{,}024 & 1{,}128 \\
 & CG-Plan & 1.00 & 15.3 & 4.0 & -- & 402 \\
\bottomrule
\end{tabular}
\end{table}

\subsection{Auditing CG-Plan}\label{sec:audit}

Because $h$ estimates remaining cost to the goal, CG-Plan can report an ex ante cost estimate in addition to selecting an action.

\paragraph{Initial cost estimate.} In the Result~1 cell, $h(s_0) = 6.10$ in every world because the estimate depends on the capability graph and achievable information rates rather than on the sampled world. The realized mean cost is $9.25 \pm 0.34$ over 20 worlds. The estimate is approximately two-thirds of realized expenditure; most of the difference is about three probe-cost units introduced by the decisive-outcome relaxation, which represents evidence accumulation fractionally. In these runs, the estimate provides a useful lower-cost benchmark before execution. Greedy EIG, $\varepsilon$-greedy selection, and rollout search do not produce an analogous explicit estimate.

\paragraph{Empirical admissibility.} Across the 105 states visited in these episodes, $h(s)$ never exceeds the realized remaining cost; no violations are observed (Figure~\ref{fig:audit}, left). This property is not guaranteed for the testbed. Section~\ref{sec:cgplan} establishes admissibility of $\hexp$ only for the binary witness, whereas the testbed query has multiple possible function values. We therefore report the result as an empirical observation rather than a theorem.

\paragraph{Allocation of cost.} We classify each unit of expenditure as a required build, distractor build, informative probe (per-step KL about $q$ at least $\tau = 0.1$ nats), or near-zero-information probe (Figure~\ref{fig:audit}, right). CG-Plan allocates $64.9\%$ of cost to required builds and $35.1\%$ to informative probes, with no expenditure on distractors or near-zero-information probes. Greedy EIG allocates all expenditure to near-zero-information probes in the same cell. In the closed-route $\varepsilon$-greedy cell of Table~\ref{tab:explore}, 85\% of expenditure goes to near-zero-information probes, 14\% to distractor builds, and 1\% to required builds. These allocations illustrate the effect of pricing capabilities with respect to the target query.

\begin{figure}[h]
\centering
\includegraphics[height=0.30\linewidth]{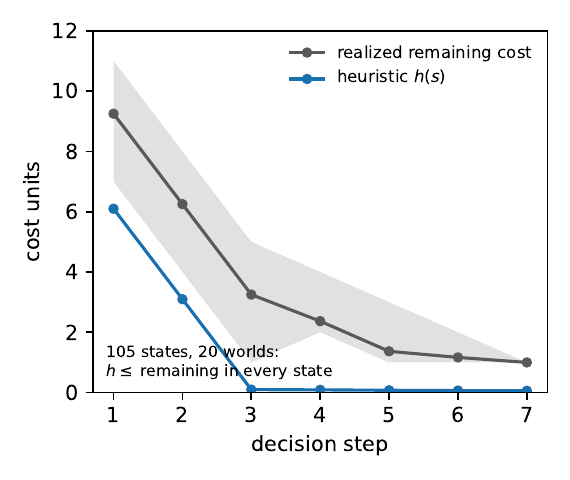}\hfill
\includegraphics[height=0.30\linewidth]{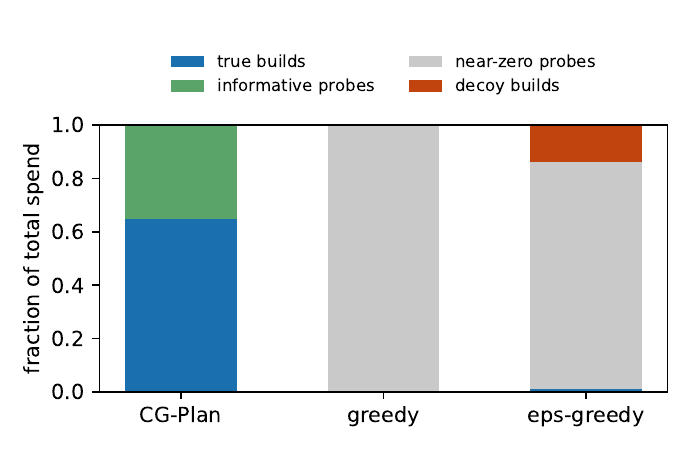}
\caption{Audit of CG-Plan. \emph{Left:} realized remaining cost to the goal (upper curve) and heuristic estimate $h(s)$ (lower curve) at each decision; bands show the minimum and maximum across 20 Result~1 worlds. The heuristic remains below realized remaining cost in all 105 visited states. \emph{Right:} allocation of total expenditure by action type. CG-Plan uses no distractor builds or near-zero-information probes; the $\varepsilon$-greedy distribution is from the closed-route cell.}
\label{fig:audit}
\end{figure}

\subsection{Limitations visible in the runs}

Two limitations are evident in the experiments. First, in the transition range $\gamma \approx 0.04$--$0.08$, CG-Plan sometimes estimates the direct route as slightly more costly than the build route and constructs a chain that a fully informed planner would omit. This overconstruction reflects approximation error in the relaxed cost-to-go estimate and motivates the path-aware route comparison. Second, all empirical results are controlled demonstrations on constructed instances. They establish the proposed mechanism and the planner's behavior on those instances, not the prevalence of capability gating in real discovery problems (\S\ref{sec:discussion}).

\section{Related work}\label{sec:related}

\paragraph{Sequential experimental design and the value of information.} The analyzed selection rule originates in Bayesian optimal experimental design, which chooses the experiment with the greatest expected information gain about a target \citep{lindley1956,mackay1992,chaloner1995}. Modern methods use amortized or variational EIG estimators and policies trained for adaptive design \citep{rainforth2018,foster2019,foster2021}. Some recent discovery loops pair a frozen language-model proposer with an outer information-based selector; one example is an agentic causal Bayesian optimization loop that maximizes pairwise information gain \citep{roy_acbo}. Selection rules of this form belong to $\Pid$ because they value an experiment through the outcome distribution it induces. Information gain remains appropriate in the no-gating special case described in \S\ref{sec:ssp}; the limitation identified here is its inability to represent capability creation when that creation determines the least-cost route to the target. Section~\ref{sec:irreducibility} distinguishes the present horizon-indexed, unbounded gap from bounded value-of-information results on the curvature axis \citep{krauseguestrin2005,krauseguestrin2009}.

\paragraph{Costly information acquisition with stopping: Pandora's Box.} Weitzman's Pandora's Box model is the economic formulation most closely related to our objective \citep{weitzman1979}. An agent opens costly boxes, observes their contents, and stops when the best observed option exceeds the value of further search. Weitzman's index policy orders boxes by independently computed reservation values. The key difference is action availability: the classical model assumes that every box can be opened from the start, in addition to assuming known reward distributions and independence. Subsequent work relaxes known-prior and independence assumptions \citep{gergatsouli2022,gergatsouli2023} and introduces opening-order constraints \citep{boodaghians2020}. These models still do not include an action whose effect is to create or enable a previously unavailable box. Capability acquisition provides precisely that operation. The belief-space shortest-path formulation retains costly stopping while representing changes in future action availability.

\paragraph{Heuristic search and incremental replanning.} CG-Plan combines established planning mechanisms. A* provides the admissible-heuristic cost-to-go framework \citep{hart1968}, and weighted A* provides a bounded-suboptimal variant \citep{pohl1970}. Lifelong Planning A* and D*~Lite reuse search results as the start state moves and edge costs change \citep{koenig2004lpastar,koenig2002dstarlite}, building on D* \citep{stentz1994}. D*~Lite is appropriate here because each executed experiment moves the current epistemic state, while newly proposed experiments and revised costs modify edges. Stochastic belief transitions are determinized in the style of PPCP \citep{likhachev2009ppcp}; the underlying SSP also relates to LAO* \citep{hansen2001lao} and RTDP \citep{barto1995rtdp}. The heuristic is the optimal cost of a delete-relaxation, following relaxed-plan heuristics in classical planning \citep{bonet2001,hoffmann2001ff}. One-time capability enablement is also loosely analogous to preprocessing shortcuts in contraction hierarchies \citep{geisberger2008}, although the mechanism here changes action availability rather than contracting graph nodes. The search algorithms are not new; the contribution is their application to capability-gated discovery and the capability term in the heuristic.

\paragraph{Submodularity, adaptive submodularity, and stochastic probing.} The closest lower-bound literature concerns greedy maximization of submodular objectives \citep{nwf1978}, adaptive submodularity \citep{golovin2011}, and stochastic probing with precedence constraints \citep{gns2017,bsz2019}. Adaptive submodular cover is the most direct comparison because the present objective is minimum cost to a confidence target. Its bounded guarantees require a fixed selectable ground set and diminishing marginal returns. The gated chain violates these conditions: the decisive measurement is initially unavailable, and the marginal benefit of successive builds increases from zero to a positive value at the unlock. Section~\ref{sec:irreducibility} states the resulting distinctions in detail.

\paragraph{Frozen-policy, growing-artifact systems.} CG-Plan is intended for an architecture in which the proposer remains fixed while capabilities and knowledge accumulate in an external versioned store. Related systems include Voyager's skill library for open-ended control \citep{wang2023voyager}, FunSearch's evolved program database \citep{romeraparedes2024funsearch}, AlphaEvolve's evolutionary code search \citep{novikov2025alphaevolve}, and DreamCoder's learned library of reusable abstractions \citep{ellis2021dreamcoder}. These systems show that a fixed generator can improve over time through a growing external artifact, and that a capability may be a skill, program fragment, abstraction, representation, or other reusable object rather than only a physical instrument. Their selection objectives differ from ours: they expand their libraries through greedy or evolutionary task-performance search rather than by minimizing cost to a confidence target over a capability-gated graph.

\paragraph{Intrinsic objectives related to reachability.} Three lines of work can encourage capability acquisition without explicitly minimizing cost to the query. \emph{Empowerment} maximizes channel capacity from action sequences to future states \citep{klyubin2005,salge2014}. Because a capability enlarges the feasible action set, empowerment may favor construction, but it is query-independent. On the distractor family in \S\ref{sec:explore}, a decoy capability can increase empowerment as much as the required chain, and no cost-to-go guarantee follows. \emph{Options and skill discovery} represent capabilities as temporally extended actions \citep{sutton1999options,eysenbach2018diayn}. Their values are reward-based; if the relevant reward lies beyond a bounded evaluation horizon, the option can exhibit the same visibility problem. \emph{Curiosity and novelty bonuses} depend on their state representation. Prediction-error curiosity over observations \citep{pathak2017} is an information functional, so a predictable build outcome receives little value. Count-based novelty over capability states does reward construction, but without query direction it can spend $\Theta(Nc_b)$ on distractors. These objectives may reach gated capabilities, but they do not price them by their contribution to the least-cost route for $q$. General belief-space planners, including POMCP \citep{silver2010pomcp}, DESPOT \citep{somani2013despot}, and belief-space task planning \citep{kaelbling2013}, lie outside $\Pid$ and can discover construction chains through search. Sections~\ref{sec:explore-abl} and~\ref{sec:uct} compare full-width and rollout search with the relaxed-plan heuristic on the testbed.

\paragraph{Automated scientific discovery systems.} End-to-end discovery systems include the AI Scientist and its tree-search successor \citep{lu2024aiscientist,yamada2025aiscientist2}, Google's AI co-scientist \citep{gottweis2025coscientist}, FutureHouse's Kosmos and Robin \citep{mitchener2025kosmos,ghareeb2025robin}, and platforms such as Ai2's Asta \citep{allenai_asta}. These systems motivate the present analysis. Many now search over ideas or experiments rather than applying only one-step ranking, but candidate scores are commonly based on predicted quality, novelty, plausibility, or information rather than minimum cost to a confidence target under gated action availability. The separation identifies a regime in which such scores can be inefficient: the least-cost route contains a capability chain whose benefit is not represented in the score. When an LLM assigns the score directly, membership in $\Pid$ is not automatic (\S\ref{sec:class}); the theorem applies only when the score approximates a bounded-horizon observation functional. Whether model-based scorers independently assign value to capability acquisition is an empirical question. The builds-before-resolution diagnostic in \S\ref{sec:experiments} can be applied to any selector as a black-box test.

\paragraph{Rediscovery backtesting.} Whether real scientific problems contain deep capability chains is an empirical question suited to rediscovery backtesting, which asks whether a method recovers a known result using only period-appropriate evidence \citep[e.g.,][for materials-science relationships recoverable from earlier literature]{tshitoyan2019}. A historical complement to the present lower bound would test whether reaching a known result required a capability chain that a myopic selector would not have constructed. In a broader formulation, the relevant capability might be an instrument, measurement procedure, representation, abstraction, or hypothesis class that was not initially available.

\section{Discussion: scope and limitations}\label{sec:discussion}

\paragraph{Lower-bound witnesses.} The instance family and testbed are adversarial witnesses, as is appropriate for a lower bound. They show that capability gating can defeat myopic selection and that CG-Plan handles the constructed mechanism. They do not establish that real scientific discovery problems commonly contain deep capability chains. That claim requires separate empirical evidence, such as historical rediscovery studies using period-appropriate information.

\paragraph{Contributions and inherited components.} Once $\Pid$ is defined, the separation proof is direct. Its role is to identify reachability as a distinct axis, establish the limitation of the myopic class, and motivate a capability-aware heuristic. The LPA*/D*~Lite replanning machinery is standard. The principal contributions are the cost-to-goal formulation of discovery with graph-changing constructive actions, the separation of capability reachability from curvature and information order, and the capability term that estimates beyond-horizon construction cost without explicit deep lookahead.

\paragraph{Known capability graph.} CG-Plan receives the capability dependency graph---the $\req$ and $\gain$ sets for every action---as input, and $\hcap$ uses this graph directly. In real discovery, the gating structure may itself be uncertain: the agent may not know which instrument or procedure will make a quantity measurable. If the graph is misspecified, $\hcap$ can lose admissibility in either direction. Spurious gates inflate the estimate, whereas omitted gates make it too small. Learning action preconditions and capability effects from experience is therefore a major prerequisite for deployment beyond constructed testbeds.

\paragraph{Measurement gates and epistemic gates.} The formal separation concerns gated action spaces in which constructive actions unlock downstream experiments. This is the simplest setting in which the mechanism can be isolated. Open-ended discovery may also contain \emph{epistemic} gates: actions that make a representation, primitive, hypothesis class, or query type available. A model of that setting would include state-dependent hypothesis and query languages, for example $s=(B,D,I,\mathcal{L},\mathcal{Q},v)$. A constructive action could then change what the agent can express as well as what it can measure. We do not prove results for this richer model. Result~4 indicates why it may matter: even with a fixed language, the proposer remains uncertain until gated data are observed; with an evolving language, the decisive hypothesis may itself be unreachable before a representational construction.

\paragraph{Open directions.} Several extensions remain. The heuristic could be learned from hindsight cost-to-go while preserving explicit capability-graph features. The scripted proposer could be replaced by a frozen language model to test whether archive-consistent near-miss hypotheses are generated reliably. Mixed actions that both construct and inform would require a new lower-bound analysis. Practical systems also require multi-resource objectives in which time, money, computation, and risk are constrained separately. State-dependent hypothesis and query languages would extend action gating to full epistemic reachability. Finally, historical rediscovery studies are needed to determine whether capability gating is common in scientific practice.

\appendix

\section{Full proofs}\label{app:proofs}

For completeness, we restate the instance family and planner class before proving Lemma~\ref{lem:indist}, Theorem~\ref{thm:unbounded}, and Proposition~\ref{prop:fail}.

\paragraph{Instance family (restated).} Fix $d \ge 1$. Instance $I(d,\gamma,c_b,\varepsilon)$ has a hidden bit $q\in\{0,1\}$ with a uniform prior, belief $p=\Pr[q=1]$, goal $G=\{\max(p,1-p)\ge 1-\varepsilon\}$, and a minimum-cost-to-goal objective. The direct probe $\Bdir$ has cost $1$, is always available, and returns $o\in\{0,1\}$ with $\Pr[o=q]=(1+\gamma)/2$. Its per-call divergence is $\KL=\gamma\ln\frac{1+\gamma}{1-\gamma}=\Theta(\gamma^2)$. The chain $b_1,\dots,b_{d+1}$ consists of actions with cost $c_b>0$, with $b_i$ requiring $b_{i-1}$; each action returns a $q$-independent observation. The precision probe $\Bprec$ has cost $1$, becomes available after $b_{d+1}$, and returns $q$ exactly. The class $\Pid$ selects $\pi(s)\in\arg\max_a F_s(c(a),O^{\le d}_{s,a})$, where $F_s$ is monotone in information about $q$ per unit cost and depends on an action only through its cost and within-$d$-step observation ensemble.

\begin{proof}[Proof of Lemma~\ref{lem:indist}]
The score assigned to $b_i$ depends only on its cost and on the joint distribution of observations reachable within $d$ steps. Advancing the construction chain changes only the availability of $\Bprec$, which yields an informative observation after the remaining $r(s)$ builds and one additional probe action. Thus its observation occurs at depth $r(s)+1$ from the current state. If $r(s)>d$, no sequence of at most $d$ actions beginning with $b_i$ reaches that observation. Within the horizon, $b_i$ and $\oslash_{c_b}$ therefore have the same cost, both return a $q$-independent observation immediately, and both leave every informative observation law unchanged. Condition~(i) of Definition~\ref{def:pid} implies that every admissible $F_s$ assigns them equal value. The direct probe $\Bdir$ has positive divergence $\Theta(\gamma^2)$, whereas the build and null actions carry zero information. Condition~(iii) therefore ranks $\Bdir$ strictly above $b_i$ for any $c_b>0$; no comparison between $c_b$ and the probe cost is required. At the initial state, $r(s_0)=d+1>d$, so no policy in $\Pid$ selects $b_1$ while $\Bdir$ is available.
\end{proof}

\noindent\emph{Why the first-build statement suffices.} The argument does not require every build to remain invisible in every state. After $k$ builds, $r(s)=d+1-k$, so the precision-probe observation lies at depth $d+2-k$ and enters the $d$-step horizon once $k\ge 2$. However, a policy in $\Pid$ strictly prefers $\Bdir$ to $b_1$ at $s_0$. It never advances the chain, never reaches a state with $r(s)\le d$, and remains confined to $\Bdir$ by induction.

\begin{proof}[Proof of Theorem~\ref{thm:unbounded}]
\emph{Optimal cost.} Building the chain and querying $\Bprec$ gives $\OPT \le (d+1)c_b + 1$ and reaches confidence $1\ge 1-\varepsilon$ deterministically. A partial-build-then-probe strategy cannot improve this bound: before the chain is complete, the builds provide no information and do not make $\Bprec$ available, so such a strategy is a direct-probe strategy with additional construction cost.

\emph{Myopic cost.} By Lemma~\ref{lem:indist}, a policy in $\Pid$ uses only $\Bdir$ and stops when the posterior first satisfies $\max(p,1-p)\ge 1-\varepsilon$. This is a sequential hypothesis test between $q=0$ and $q=1$ through a channel with divergence $\KL=\Theta(\gamma^2)$. The change-of-measure converse for sequential tests, obtained from Wald's identity for the log-likelihood ratio, gives
\[ \mathbb{E}[N]\cdot \KL \ \ge\ \mathrm{kl}(1-\varepsilon,\varepsilon)\ \ge\ (1-2\varepsilon)\ln\tfrac{1-\varepsilon}{\varepsilon}, \]
where $\mathrm{kl}(a,b)=a\ln\frac{a}{b}+(1-a)\ln\frac{1-a}{1-b}$ is binary relative entropy. Hence any probe-only stopping rule satisfies $\mathbb{E}[\text{myopic cost}]=\mathbb{E}[N]\ge (1-2\varepsilon)\ln\frac{1-\varepsilon}{\varepsilon}/\KL=\Omega(\log(1/\varepsilon)/\gamma^2)$. Under a uniform prior, stopping at posterior confidence at least $1-\varepsilon$ gives Bayes error at most $\varepsilon$. The average of the two per-hypothesis errors is therefore at most $\varepsilon$, so each error is at most $2\varepsilon$; applying the converse at level $2\varepsilon$ changes only constants absorbed by the $\Omega(\cdot)$ notation.

\emph{Ratio.} Therefore,
$\mathbb{E}[\text{myopic cost}]/\OPT \ge \Omega\big(\log(1/\varepsilon)/(\gamma^2((d+1)c_b+1))\big)$.
Hold $d$, $c_b$, and $\varepsilon$ fixed and let $\gamma\to 0$. The bound diverges, so $\gamma$ can be chosen small enough that the ratio exceeds any prescribed $\rho$.
\end{proof}

\paragraph{Capped instance (restated).} Instance $I'(d,\gamma,c_b,\varepsilon)$ is identical to $I$ except that a nuisance bit $\nu$ is drawn once per episode with $\Pr[\nu=1]=\beta$, where $\beta$ satisfies $\max(\beta,1-\beta)=1-\varepsilon'$ for some $\varepsilon'>\varepsilon$. The direct probe returns $o$ with $\Pr[o=q\oplus\nu]=(1+\gamma)/2$, whereas $\Bprec$ still returns $q$ exactly.

\begin{proof}[Proof of Proposition~\ref{prop:fail}]
By Lemma~\ref{lem:indist}, a policy in $\Pid$ does not take the first build while $\Bdir$ has positive within-horizon value, so it remains confined to $\Bdir$. Repeated direct probes identify $q\oplus\nu$ but not $q$ separately. Because $\nu$ is unobserved and is conditionally independent of the direct-probe channel given $q\oplus\nu$, the posterior on $q$ converges to $\Pr[q=1\mid q\oplus\nu=v]\in\{\beta,1-\beta\}$. Its maximum is $1-\varepsilon'<1-\varepsilon$. Thus the supremum confidence attainable from direct probes lies strictly below the goal threshold, and the belief never enters $G$. As the per-step information about $q$ approaches zero, the build chain can become tied with a zero-information non-constructive action. The no-build-on-ties convention keeps the policy off the chain. It therefore either stops below the threshold or continues probing below the threshold indefinitely. The optimal policy builds the chain and queries $\Bprec$, which bypasses $\nu$ and reaches confidence $1$ at cost $(d+1)c_b+1$.
\end{proof}

\begin{proof}[Proof sketch of Proposition~\ref{prop:explore}]
In $I_N$, each chain position has $N$ repeatable distractor builds, so at least $N+2$ actions remain available and $\OPT \le (d{+}1)c_b + 1$ is unchanged. The goal can be reached through either construction or direct probing. For the construction route, Lemma~\ref{lem:indist} implies that the $\Pid$ component never selects $b_1$ at the initial frontier. The true build is therefore chosen only by exploration, with probability at most $\varepsilon_0/(N{+}2)$ per step. Consequently,
$\Pr[b_1 \text{ is selected within } t \text{ steps}] \le t\varepsilon_0/(N{+}2)$.
For the direct-probe route, any stopping rule that crosses the confidence threshold requires $W=\Omega(\log(1/\varepsilon)/\gamma^2)$ probes, and interleaved construction only increases cost. Taking $t=\tfrac{1}{2}\min\{(N{+}2)/\varepsilon_0, W\}$ and applying a union bound yields $\Pr[\text{goal by }t]\le \tfrac{1}{2}+o(1)$. Hence
$\mathbb{E}[\text{cost}] \ge (c_{\min}/2)\min\{(N{+}2)/\varepsilon_0,W\}$,
where $c_{\min}=\min(1,c_b)$. Letting $N\to\infty$ and $\gamma\to 0$ jointly makes the ratio to $\OPT$ exceed any prescribed $\rho$. A complete proof of the probe-route step requires a finite-time, high-probability converse for sequential testing rather than only Wald's expectation bound. Results of the required change-of-measure form are standard \citep{kaufmann2016}, but we do not derive the specialized bound for this binary channel here; accordingly, this argument is presented as a proof sketch.
\end{proof}

\section{Reproducibility}\label{app:repro}

All experiments use chain-structured capability graphs. Therefore, $\hcap$ is evaluated in the exact closed form from \S\ref{sec:cgplan}, and the general directed-Steiner approximation is not used. The testbed, baselines, regime sweep, quantifier grid, dynamic proposer, distractor and no-gating controls, and estimator-stability analyses are implemented in a compact package containing six core modules and analysis scripts. The package regenerates every table in \S\ref{sec:experiments}; it is available from the authors and will accompany the peer-reviewed version.

Greedy EIG estimates one-step information gain by nested Monte Carlo, using 300 outer samples per candidate by default and 80--120 in selected multi-world sweeps. The transition cells in Table~\ref{tab:gamma} use 300 samples. At the Result~1 setting, greedy EIG remains at 0/10 success for $n_{\mathrm{outer}} \in \{60,120,300\}$. Within the transition band, where candidate scores are nearly tied, success depends on estimator variance; Table~\ref{tab:gamma} therefore reports the estimator setting. Information-score ties are resolved in favor of the lower-cost action and never in favor of a zero-information build. The $H$-step planner commits to construction only when the precision probe lies within its horizon; otherwise, it selects the greedy-EIG action. Selected actions are simulated against the true circuit. Success proportions use Wilson 95\% intervals, and costs and counts use normal-approximation 95\% intervals across worlds.

The exploration, ablation, and full-width-lookahead analyses in \S\ref{sec:explore-abl} are implemented in \texttt{exp\_review.py}. The initial-cost, empirical-admissibility, and expenditure-allocation analyses in \S\ref{sec:audit} are implemented in \texttt{exp\_audit.py}; the informative-probe threshold is $\tau=0.1$ nats, the greedy allocation analysis uses $n_{\mathrm{outer}}=60$ based on the stability check, and the $\varepsilon$-greedy allocation comes from the closed-route cell. The rollout-search comparison in \S\ref{sec:uct} is implemented in \texttt{exp\_pomcp.py}, with goal reward 100, exploration constant 30, rollout depth 18, average backup, and per-world seeds. Evaluation criteria were fixed before these runs. The $\varepsilon$-greedy wrapper composes with any selector, ablations change only the heuristic in the CG-Plan selection rule, and full-width lookahead enumerates determinized action sequences using root-cached discrimination values.

\bibliographystyle{plainnat}
\bibliography{refs}

\end{document}